\documentclass[letterpaper, 10 pt, journal, twoside]{IEEEtran}
\IEEEoverridecommandlockouts
\usepackage{cite}
\usepackage{amsmath,amssymb,amsfonts, mathtools, amsthm}
\usepackage{algorithmic}
\usepackage{graphicx}
\graphicspath{ {./figures/} }
\usepackage{textcomp}
\usepackage{xcolor}
\usepackage{bbm}
\usepackage{comment}
\usepackage{paralist}
\usepackage[caption=false]{subfig}
\usepackage{scalerel}    
\usepackage{bm}
\usepackage{amsthm}
\theoremstyle{plain}

\DeclarePairedDelimiter\abs{\lvert}{\rvert}%

\usepackage[linesnumbered,ruled]{algorithm2e}

\theoremstyle{remark}
\newtheorem{remark}{Remark}
\theoremstyle{definition}
\newtheorem{definition}{Definition}
\theoremstyle{plain}
\newtheorem{theorem}{Theorem}

\newtheorem{lemma}{Lemma}
    
\newtheorem{subproblem}{Subproblem}  

\usepackage{dirtytalk}
\usepackage[hidelinks]{hyperref}
\pdfminorversion=4

\let\oldnl\nl
\newcommand{\nonl}{\renewcommand{\nl}{\let\nl\oldnl}}

\DeclareMathOperator*{\argmax}{argmax} 
\DeclarePairedDelimiterX{\norm}[1]{\lVert}{\rVert}{#1}
\DeclareMathOperator*{\argmaxgreedy}{argmax\_greedy} 

\usepackage{xparse}
\usepackage{hyperref} 

\newcounter{problem}
\renewcommand{\theproblem}{\arabic{problem}}

%

\makeatletter
\NewDocumentEnvironment{problem}{o}
 {%
   \refstepcounter{problem}%
   \IfValueTF{#1}
     {\edef\problem@display{#1}}%
     {\edef\problem@display{\theproblem}}%
   \def\@currentlabel{\problem@display}%
   \noindent\textbf{Problem \problem@display.}\quad
 }
 {%
   \par\vspace{1em}%
 }
\makeatother

\begin{document}
\title{Human-in-the-Loop Multi-Robot Information Gathering with Inverse Submodular Maximization}

\author{
\IEEEauthorblockN{Guangyao Shi\textsuperscript{$\dagger$}}, 
\IEEEauthorblockN{Shipeng Liu\textsuperscript{$ \ddagger$}}, 
\IEEEauthorblockN{Ellen Novoseller\textsuperscript{*}},
\IEEEauthorblockN{Feifei Qian\textsuperscript{$ \ddagger$}},
\IEEEauthorblockN{Gaurav S. Sukhatme\textsuperscript{$\dagger$}}

\thanks{
\textsuperscript{$\dagger$} Department of Computer Science, University of Southern California, Los Angeles, CA 90089, USA. Email: \texttt{\{shig, gaurav\}@usc.edu}}
\thanks{\textsuperscript{$\ddagger$}Department of Electrical and Computer Engineering, University of Southern California, Los Angeles, CA 90089, USA. Email: \texttt{\{shipengl, feifeiqi\}@usc.edu}}
\thanks{\textsuperscript{*}U.S. Army DEVCOM Army Research Laboratory, Aberdeen Proving Ground, MD 21005, USA. \\ Email: \texttt{ellen.r.novoseller.civ@army.mil}}
}

\markboth{IEEE Transaction}%
{Author \MakeLowercase{\textit{et al.}}: Human-in-the-Loop Multi-Robot Information Gathering with Inverse Submodular Maximization}
\IEEEpubid{}

\maketitle

\begin{abstract}
 We consider a new type of inverse combinatorial optimization, Inverse Submodular Maximization (ISM), for its application in human-in-the-loop multi-robot information gathering. 
 Forward combinatorial optimization - solving a combinatorial problem given the reward (cost)-related parameters - is widely used in multi-robot coordination. In the standard pipeline, domain experts design the reward (cost)-related parameters offline. These parameters are utilized for coordinating robots online. What if non-expert human supervisors desire to change these parameters during task execution to adapt to some new requirements? We are interested in the case where human supervisors can suggest what path primitives to take, and the robots need to change the internal decision-making parameters accordingly. We study such problems from the perspective of inverse combinatorial optimization, i.e., the process of finding parameters that give certain solutions to the problem. Specifically, we propose a new formulation for ISM for a family of multi-robot information gathering scenarios, in which we aim to find a new set of parameters that minimally deviates from the current parameters while causing a greedy algorithm to output path primitives that are the same as those desired by the human supervisors. We show that for the case with a single suggestion, such problems can be formulated as a Mixed Integer Quadratic Program (MIQP), which is intractable for existing solvers when the problem size is large. We propose a new Branch $\&$ Bound algorithm to solve such problems. For the case with multiple suggestions from several human supervisors, the problem can be cast as a multi-objective optimization and can be solved using Pareto Monte Carlo Tree Search. In numerical simulations, we demonstrate how to use ISM in multi-robot scientific data collection and event detection-driven coverage control. We show that the proposed formulations and algorithms can help robots effectively adapt to human suggestions. 
\end{abstract}

\begin{IEEEkeywords}
Multi-robot information gathering, Human-in-the-loop coordination, Human-multi-robot teaming, Informative Path Planning, Task planning.
\end{IEEEkeywords}

\section{Introduction}

 Multi-robot teams are increasingly being employed in scientific information-gathering applications~\cite{schlotfeldt2018anytime, dutta2021multi, xu2022interactive, tokekar2016sensor}, where mobile robots act as intelligent sensing agents that autonomously navigate in natural environments to collect spatio-temporal data. Such robotic teams can greatly reduce the workload and cognitive burden of scientists. To fully leverage the sensing and sampling potential of such robotic systems, an effective and principled decision-making framework is essential. Traditionally, such multi-robot decision-making problems have been formulated as combinatorial optimization tasks, which are typically NP-hard~\cite{parker2012decision}. In the existing literature, it is generally assumed that domain experts will carefully design the optimization objective offline. For example, as shown in Fig. \ref{fig:illustration_coverage}, in a multi-robot event detection application, experts will use the event data observed in the past and their preferences about events (e.g., how important/urgent each event is) to design an objective. The design contains several parameters related to the prior knowledge of the events and user preferences. Subsequently, robots are deployed for event detection by solving the optimization problem online and executing the plan autonomously. Many research efforts are devoted to how to (near) optimally solve the problem in a time-efficient manner. 

\begin{figure}
\centerline{\includegraphics[scale=0.6]{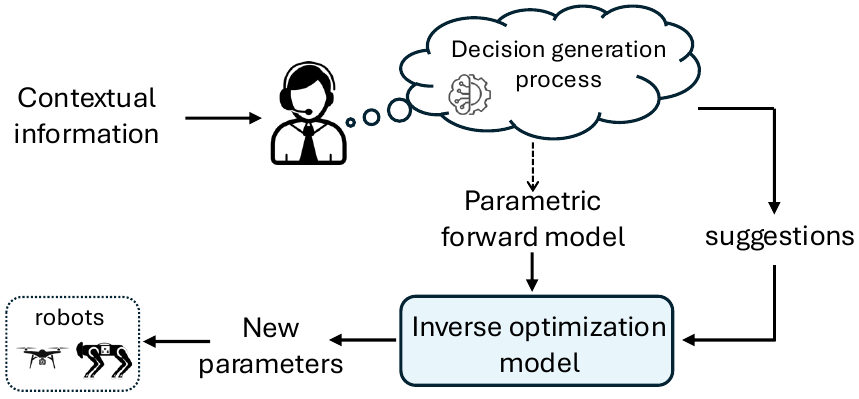}}
\caption{Schematic illustration of the proposed inverse submodular maximization framework in the context of multi-robot coordination. The inverse optimization module will take the forward parametric model and the human suggestions as input and output the updated parameters for forward models.
}
\label{fig:overview}
\end{figure}

However, when robot teams are deployed in the field, rather than purely relying on robotic autonomy, a human supervisor will usually stay in the loop to watch over the robotic team. They may receive additional information from external sources or extract additional information based on their experience and the current observations, and occasionally suggest possibly better and different path primitives to take compared to those obtained from solving the original decision-making problems designed offline. In other words, there is a mismatch between the decision-making model used by the human supervisor and the one used by robot teams. Such differences reflect either an oversight in the design phase, the arrival of new information/instructions, or a change in human preferences. In fact, there are already research showing that as human scientists collect samples from environments, they may update their sampling strategies and priorities in field tests compared to what they thought before testing to facilitate better scientific discovery~\cite{wilson2021spatially, liu2024understanding, liu2024modelling}.  When facing such differences, it is undesirable to stop the team and redesign the decision-making process. Moreover, the human supervisors in the loop may not be optimization experts, and they may not know how to recast the optimization problem to accommodate their new insights. By contrast, \textit{we want the robot team to have the capability to modify the decision-making problem in a \textbf{minimal} way to accommodate new supervisory suggestions}. The reason for the minimal change is that the existing decision-making problem already embodies substantial expertise and historical data, and human preferences, when facing trade-offs, tend to evolve through small, cumulative adjustments rather than sudden, dramatic changes \cite{liu2024modelling}. We should not discard the previous parameters entirely in response to a few new suggestions.

\begin{figure}
\centerline{\includegraphics[scale=0.4]{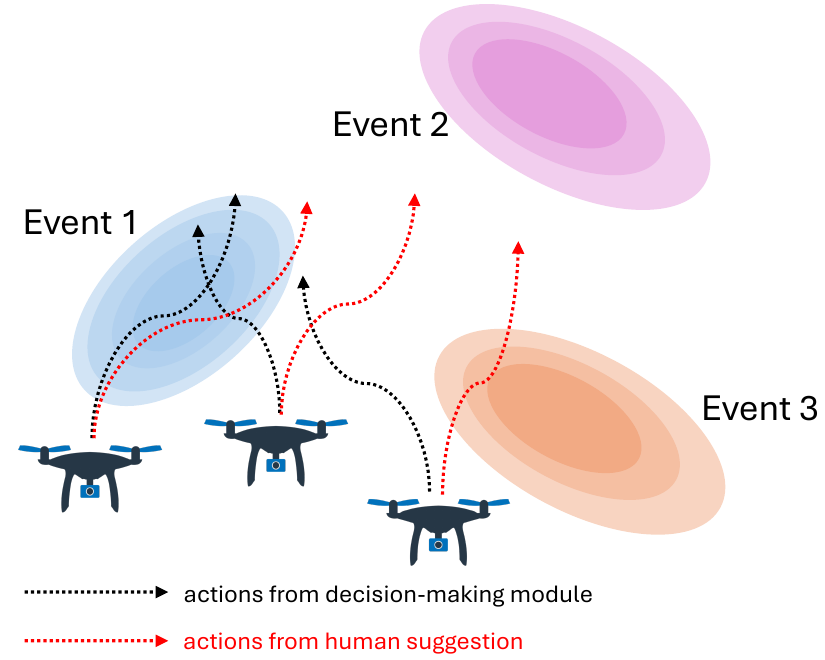}}
\caption{A motivating example of inverse submodular maximization. A team of robots is deployed to detect multiple events, each of which is associated with a priority, and the task is cast as a submodular maximization problem. Black dotted lines are path primitives derived from task optimization, and red dotted lines are path primitives suggested by humans. The team needs to minimally adjust the submodular objective to account for human suggestions. 
}
\label{fig:illustration_coverage}
\end{figure}

In this paper, we explore using inverse combinatorial optimization to equip robots with the capabilities to accommodate human suggestions during information gathering tasks. Specifically, we focus on a family of decision-making problems for multi-robot information gathering, submodular maximization. In multi-robot information gathering, many objectives (e.g., mutual information \cite{krause2008near, hollinger2014sampling, shi2023robust}, area coverage~\cite{poduri2004constrained},
target search and tracking \cite{ xu2025communication, xu2023online, Xu2023BanditSubmodularMaximization, shi2021communication, li2025multi, xu2023online}, detection probability \cite{tseng2017near, wu2022adaptive} etc.) have a diminishing returns
property, i.e., submodularity. Intuitively, submodularity formalizes the notion that adding a robot to a larger multi-robot team will yield a smaller marginal gain
in the objective than adding the same robot to a smaller team.  We are interested in considering decision-making problems that involve maximizing such parameterized submodular objectives, which is NP-hard but can be solved with an $(1-\frac{1}{e})$-approximation by a greedy algorithm \cite{nemhauser1978analysis}.
For such problems with a parameterized submodular objective, e.g, \cite{wilder2019melding, shi2023decision}, if the parameters are known, we can use a greedy approximation algorithm and its variants to solve the problem and obtain a nearly optimal solution. We call such a process Forward Submodular Maximization (FSM). By contrast, if we have a solution suggested as the output of an approximation algorithm, we want to find \textit{minimally} modified parameters in the objective compared to the original parameters such that when we use those modified parameters to solve the FSM problem, the resulting solutions match the suggested solutions. We call this process Inverse Submodular Maximization (ISM), as illustrated in Fig. \ref{fig:overview}. ISM problems fall into the more general class of inverse optimization problems, which are widely used to decode human decision-making processes \cite{terekhov2010analytical, tsirakos1997inverse, sandholtz2023inverse} for robotic applications. 

Here, we study such ISM problems for multi-robot information gathering, formulated as FSM problems, to adapt to changes in human preferences, which are defined as different trade-off strategies between multiple objectives. Specifically, we formulate two types of ISM problems motivated by common scenarios in human–multi-robot collaborative scientific data collection.
In the first type, a human supervisor provides a single suggestion regarding the path primitives to be taken. The goal is to adjust the parameters minimally so that the approximation algorithm produces a solution identical to the human suggestion when using the updated parameters.
In the second type, the human supervisors provide multiple suggestions, {each paired by a confidence score}. The objective is to identify new parameters for the submodular objective function that minimally deviate from the originally designed parameters while ensuring that the resulting solution is “close” to the human suggestions. 
To the best of the authors' knowledge, this is the first comprehensive study on formulating ISM problems for adapting human preference in multi-robot information gathering. We note that even in the optimization literature, the ISM problem is not well explored, and there is no existing published canonical formulation of ISM. In addition to the ISM formulations, we introduce a series of new algorithms to solve the ISM problems.

The main contributions of this paper are:
\begin{itemize}
    \item We formulate a new family of inverse optimization problems, ISM, for a family of human-multi-robot collaborative data collection applications.
    \item We propose several efficient algorithms to solve the proposed ISM problems and validate their effectiveness through extensive simulations.
    \item We demonstrate the applicability of the proposed formulations to multi-robot information-gathering problems through two representative case studies: multi-robot event detection and multi-robot geological data collection.
\end{itemize}

\section{Related work} \label{sec:relwork}
\textbf{Inverse Optimization:}
Inverse optimization is gaining increasing attention over the past years and is studied for a diverse range of applications, including vehicle routing \cite{scroccaro2023inverse,  chen2021inverse}, power system \cite{tan2023data, hu2018inverse}, and robotics \cite{mombaur2010human, jin2021inverse, finn2016guided}. Based on the nature of the forward optimization problem, we can roughly classify inverse problems into two categories: continuous case and discrete case. Most existing work related to robotics fits into the continuous category.  Researchers use Inverse Optimal Control (IOC) or Inverse Reinforcement Learning (IRL) to refer to inverse optimization. IOC/IRL is usually studied to learn the human behaviors for one robot, for example, human walking for humanoid locomotion and human grasping skills for robot arm manipulation. Extending frameworks of IOC/IRL from one robot to multiple robots is not a trivial task, and research along this line is still less mature \cite{vsovsic2016inverse, yu2019multi, wang2018competitive}.   
By contrast, the discrete case is less considered in the robotics literature. The discrete case corresponds to combinatorial optimization, which is widely used in multi-robot coordination, in the forward problem. In the optimization community, such a discrete inverse problem is referred to as Inverse Combinatorial Optimization (ICO) \cite{heuberger2004inverse}, among which a subbranch called the Inverse Integer Optimization (IIO) is widely studied \cite{schaefer2009inverse}. However, IIO can not deal with a submodular objective. The ISM formulations proposed in this paper can be viewed as another sub-branch of ICO, and this is the first paper to study such problems in depth in the context of multi-robot coordination.  Existing work \cite{shi2024inverse} considers only a restricted subproblem (single suggestion) of the broader problem studied in this paper and does not include applications to scientific data collection.

\textbf{Human Preference Learning:}
ISM is also related to the research on human preference learning. In these works \cite{biyik2018batch, wilde2020improving, biyik2019asking, wilde2020active},  humans are usually given two solutions iteratively and need to compare these two solutions for each iteration. Most research focuses on how to efficiently generate queries for humans to learn the preference parameters. By contrast, in ISM, humans are presented with a ground set, and they will select a subset that they think the \textit{approximation algorithm} should output. The research focus is on how to minimally modify the parameters in the objective to match the human suggestions. There is no iterative process in ISM because ISM targets online information gathering scenarios in which the tasks will not be repeated many times. By contrast, existing human preference learning literature focuses on a task (e.g., pick-and-place type manipulation tasks) that will be repeated many times, and the goal is to make the robot replicate humans' behavior in future repetitive trials. 

\section{Preliminaries}\label{sec:preliminaries}
We use calligraphic fonts to denote sets (e.g. $\mathcal{A}$). Given a set $\mathcal{A}$, $2^{\mathcal{A}}$ denotes the power set of $\mathcal{A}$ and $|\mathcal{A}|$ denotes the cardinality of $\mathcal{A}$. Given another set $\mathcal{B}$, the set $\mathcal{A} \setminus \mathcal{B}$ denotes the set of elements in $\mathcal{A}$ but not in $\mathcal{B}$. 
We will use $\Delta f (s \mid \mathcal{S})$ to denote the marginal gain of adding an element $s$ to a set  $\mathcal{S}$, i.e.,
$\Delta f (s \mid \mathcal{S}) = f(\{s\} \cup \mathcal{S}) - f( \mathcal{S})$, where $f$ is an objective function that we wish to maximize. We will denote $\norm{\cdot}$ as the $\ell^2$-norm. A set is an ordered set if it can preserve the order in which we insert the elements. For an ordered set $\mathcal{S}$, we use $\mathcal{S}[i]$ and $\mathcal{S}[1:i]$ to refer to its $i$-th element and its first $i$ elements, respectively. For an unordered set, we call each permutation of the set as an \textit{ordering} of the set.
We use a capitalized bold letter such as $\mathbf{X}$ to denote a random vector, use a bold letter $\mathbf{x}$ to denote a realization of $\mathbf{X}$, use $X$ to denote locations, and use $x/\bm{x}$ to denote a general variable (scalar $/$ vector).

We now define two useful properties of set functions.
\begin{definition}[Normalized Monotonicity]
For a set $\mathcal{V}$, a function $f: 2^{\mathcal{V}} \mapsto \mathbb{R}$ is called normalized and monotonically non-decreasing if and only if (1) for any $\mathcal{A} \subseteq \mathcal{A}^{\prime} \subseteq \mathcal{V}, f(\mathcal{A}) \leq f(\mathcal{A}^{\prime})$ and (2) $f(\mathcal{A})=0$ if and only if $\mathcal{A}=\emptyset$.
\end{definition}
As shorthand, we refer to a normalized, monotonically non-decreasing function as simply a monotone function.
\begin{definition}[Submodularity]
For a finite ground set $\mathcal{V}$, a function $f: 2^{\mathcal{V}} \mapsto \mathbb{R}$ is submodular if the following condition is satisfied: for every $\mathcal{A}, \mathcal{B} \subseteq \mathcal{V}$ with $\mathcal{A} \subseteq \mathcal{B}$ and every $v \in \mathcal{V} \setminus \mathcal{B}$ we have that 
\begin{equation}\label{eq:submodularity_definition}
    f(\mathcal{A} \cup \{v\}) -f(\mathcal{A}) \geq f(\mathcal{B} \cup \{v\}) -f(\mathcal{B}).
\end{equation}
\end{definition}
Similarly, if the LHS of Eq. \eqref{eq:submodularity_definition} is always no greater than the RHS of Eq. \eqref{eq:submodularity_definition}, the function $f$ is supermodular.

\subsection{Forward Submodular Maximization}
 Many objectives in multi-robot information gathering applications exhibit submodularity~\cite{krause2008near, hollinger2014sampling, shi2023robust, poduri2004constrained,  xu2023online, Xu2023BanditSubmodularMaximization, shi2021communication, li2025multi}.  
We consider a problem setup with $n_r$ robots, and each robot has a set of path primitives. 
At the decision-making moment, robots need to jointly maximize monotone submodular functions by selecting a primitive to follow to obtain as much information as possible. Two specific examples will be discussed in Sec. \ref{sec:case_studies}. Mathematically, we want to solve the following optimization problem:
\begin{equation}\label{eq:SM}
    \begin{aligned}
        \max_{\mathcal{S} \subseteq \mathcal{P}} &~f(\mathcal{S}, \bm{e}, \bm{\theta}) \\
        \text{s.t.} ~& ~\abs{\mathcal{S} \cap \mathcal{P}_i} \leq \kappa_i, ~i=\{1, \ldots, n_r\}, \kappa_i \in \mathbb{Z}_{\geq 1}. 
    \end{aligned}
\end{equation}
where $f$ is a submodular objective and is characterized by two parameters: $\bm{e} \in \mathcal{E}$ denotes the environmental state variables and $\bm{\theta} \in \Theta$ is a vector of weights denoting the human preference on how to trade off between objectives; $\mathcal{P}_i$ denotes the set of path primitives for robot $i$ and $\mathcal{P}=\cup_{i=1}^{n_r} \mathcal{P}_i, ~ \mathcal{P}_i \cap \mathcal{P}_j =\emptyset ~ \forall i \neq j$ is the ground set of path primitives for all robots; $\mathcal{S}$ is the selected path primitives for robots; $\kappa_i$ denotes the number of path primitives we can select for one robot and is usually set to be one; $n_r$ denotes the number of robots.

In the context of information gathering, $\bm{e}$ can be viewed as the output of the perception (e.g., mapping and localization) module. It is a feature vector representing what the environment is like. Humans' suggestions will not affect $\bm{e}$. Our focus for ISM is human-preferences related parameters $\bm{\theta}$. Two representative examples of the submodular objectives will be discussed in Sec. \ref{sec:case_studies}.

As an initial trial to study the inverse version of submodular maximization, we will confine our discussion to the case where $f(\mathcal{S}, \bm{e}, \bm{\theta})$ is a linear combination of submodular basis functions \cite{wilde2021learning, tseng2017near}, i.e.,
\begin{equation}\label{eq:linear_submodular}
    f(\mathcal{S}, \bm{e}, \bm{\theta}) =  \bm{\theta}^{T}g(\mathcal{S},  \bm{e}),
\end{equation}
where $g( \mathcal{S}, \bm{e})=[g_1(\mathcal{S}, \bm{e}), ..., g_{n_o}(\mathcal{S}, \bm{e})]^T$ is a set of monotone submodular functions and $n_o$ denotes the number of objectives. The objective function $f$ of such form is commonly used in multi-objective decision-making problems (e.g., information gathering) where we need to make trade-offs by weighting different objectives using $\bm{\theta}$. We leave the more general case for future work. For brevity, we will ignore the $\bm{e}$ in the objective in the rest of the paper.

For a given $\bm{\theta}$, the forward problem in Eq. \eqref{eq:SM} can be solved nearly optimally using the greedy algorithm \cite{nemhauser1978analysis} as shown in Algorithm \ref{algorithm:greedy} in which $\mathcal{I} = \{ \mathcal{S} \subseteq \mathcal{P} \mid \abs{\mathcal{S} \cap \mathcal{P}_i} \leq \kappa_i, ~i=\{1, \ldots, n_r\}, \kappa_i \in \mathbb{Z}_{\geq 1}\}$.
\begin{algorithm}[ht]\label{algorithm:greedy}
    \caption{Greedy Algorithm}
    \SetKwInOut{Input}{Input}
    \SetKwInOut{Output}{Output}
    \Input{
    \begin{itemize}
        \item A monotone submodular function $f$
        \item A partition matroid $\mathcal{M}=(\mathcal{P}, ~\mathcal{I})$
    \end{itemize}
    }
    \Output{
    A subset $\mathcal{S} \in \mathcal{I}$ of the ground set $\mathcal{P}$
    }
    $\mathcal{S} \gets \emptyset$ \\
    \While{$\abs{\mathcal{S}} < n_r$}{
    \nonl \# find the element with the largest marginal gain \\
    $s = \argmax_{s \in \mathcal{P}, \{s\} \cup \mathcal{S} \in \mathcal{I}}~\Delta f(s \mid \mathcal{S})$ \\
    $\mathcal{S} \gets \mathcal{S} \cup \{s\}$
    }
    return $\mathcal{S}$
\end{algorithm}

\subsection{Gaussian Process Regression}
\begin{definition}[Gaussian Process]
Let $\mathcal{X}$ be an input space (e.g.\ $\mathbb{R}^d$), and let
$f : \mathcal{X} \to \mathbb{R}$ be a random function, i.e., a
collection of real-valued random variables indexed by $x \in \mathcal{X}$:
$
\{ f(x) : x \in \mathcal{X} \}.
$
We say that $f$ is a \emph{Gaussian process ($\mathcal{GP}$)} if for any finite
set of points $x_1, \dots, x_n \in \mathcal{X}$, the random vector
\(
\bigl[f(x_1), \dots, f(x_n)\bigr]^{T} \in \mathbb{R}^n
\)
has a multivariate Gaussian distribution.

A Gaussian process is fully specified by a \emph{mean function}
    $
    m(x) = \mathbb{E}[f(x)], \quad x \in \mathcal{X},
    $
 and a \emph{covariance (kernel) function}
$
k(x, x') = \operatorname{cov}\bigl(f(x), f(x')\bigr),
    \quad x, x' \in \mathcal{X}.
$
We write this compactly as:
\[
f \sim \mathcal{GP}\bigl(m(\cdot), k(\cdot, \cdot)\bigr).
\]

For any finite collection $x_1, \dots, x_n \in \mathcal{X}$, this implies
\[
\mathbf{f} =
\begin{bmatrix}
f(x_1)\\
\vdots\\
f(x_n)
\end{bmatrix}
\sim
\mathcal{N}\bigl(
\boldsymbol{\mu},
\mathbf{K}
\bigr),
\]
where $\mu_i = m(x_i)$ and $K_{ij} = k(x_i, x_j)$.
\end{definition}

In many multi-robot information gathering applications, we need to coordinate robots to learn a map $f: \mathcal{X} \to \mathbb{R}$ (i.e., a function) that takes locations as input and outputs certain physical properties about the locations. Such a mapping can denote occupancy of space~\cite{corah2019distributed}, energy consumption~\cite{quann2020off}, temperature~\cite{popovic2020informative}, biological concentration~\cite{fonseca2023adaptive}, etc.  In practice, the structure of the underlying function is usually unknown, and its analytical evaluation may be challenging. In such cases, Gaussian process regression is often used as an alternative approach in describing, actively learning, and optimizing unknown functions~\cite{schulz2018tutorial, williams2006gaussian}. 

\begin{definition}[Gaussian Process Regression]
Let $\mathcal{X}$ be an input space and let
$f : \mathcal{X} \to \mathbb{R}$ be a stochastic process with a
Gaussian process prior
$
f \sim \mathcal{GP}\bigl(m(\cdot), k(\cdot, \cdot)\bigr),
$
where $m : \mathcal{X} \to \mathbb{R}$ is the mean function and
$k : \mathcal{X} \times \mathcal{X} \to \mathbb{R}$ is the covariance (kernel) function.

Given training inputs $X = (x_1, \dots, x_n)^\top$ and corresponding
observations $y = (y_1, \dots, y_n)^\top$, we assume that each sample is obtained using an observation model
\(
y_i = f(x_i) + \varepsilon_i, \quad \varepsilon_i \sim \mathcal{N}(0, \sigma_i^2)
\)
independently for $i = 1, \dots, n$, where $\sigma_i^2 > 0$ is the noise variance.

For a set of test inputs $X_* = (x_1^*, \dots, x_m^*)^\top$, denote 
$f_* = (f(x_1^*), \dots, f(x_m^*))^\top$ as the corresponding (latent) function values.
Then, under the GP prior and the Gaussian noise model, the joint distribution of
$y$ and $f_*$ is multivariate normal, and the posterior (predictive) distribution
of $f_*$ given $(X, y, X_*)$ is
\(
p(f_* \mid X, y, X_*) = \mathcal{N}\bigl(\mu_*, \Sigma_*\bigr),
\)
where
\begin{align}
\resizebox{0.9\linewidth}{!}{$
\mu_* = m(X_*) + K(X_*,X)\bigl[K(X,X) + \sigma^2 I\bigr]^{-1}\bigl(y - m(X)\bigr)
$}, \label{eq:GPR:mean}\\
\resizebox{0.9\linewidth}{!}{$
\Sigma_* = K(X_*,X_*) - K(X_*,X)\bigl[K(X,X) + \sigma^2 I\bigr]^{-1}K(X,X_*)
$}. \label{eq:GPR:variance}
\end{align}

Here $K(X,X)$ is the $n \times n$ covariance matrix with entries
$K(X,X)_{ij} = k(x_i, x_j)$, $K(X_*, X)$ is the $m \times n$ matrix with
entries $k(x_i^*, x_j)$, and $K(X_*, X_*)$ is the $m \times m$ matrix with
entries $k(x_i^*, x_j^*)$.
The resulting mapping from $(X,y)$ to the posterior distribution
$p(f_* \mid X,y,X_*)$ is called \emph{Gaussian process regression}.
\end{definition}

\section{Problem Formulation}
This section presents formal formulations of ISM. Specifically, we consider two types of ISM. In the first type, a human supervisor provides a single suggestion to the robot team for preference adaptation. A suggestion consists of a set of distinct primitives, with one primitive assigned to each robot.
In the second type of formulation, human supervisors provide multiple suggestions. Such a formulation is suitable for cases where multiple experts are involved to tune the decision-making objective, and people propose different solutions.
ISM formulations presented in this section can be used as a complementary part of the existing multi-robot coordination framework based on submodular maximization to accommodate human suggestions.

\subsection{Case I: A single human suggestion}

\begin{problem}[1] \label{prob:single_suggestion} Given a suggestion set $\hat{\mathcal{S}} \subseteq \mathcal{P}$ from a human, which includes one path primitive per robot, find a new parameter vector $\hat{\bm{\theta}}$ such that the distance between the original parameter vector $\bm{\theta}$ and the new parameter vector $\hat{\bm{\theta}}$ is minimized and the human suggestion $\hat{\mathcal{S}}$ becomes the solution for the new problem using $\hat{\bm{\theta}}$ in the forward approximation algorithm, i.e., greedy selection. Mathematically, the inverse problem can be formulated as:
    \begin{align}
    \min_{\hat{\bm{\theta}}}~ &\norm{\hat{\bm{\theta}} - \bm{\theta}} \\
    \rm{s.t.} \quad & {\mathcal{S}(\hat{\bm{\theta}})} = \argmaxgreedy_{\mathcal{S}} f(\mathcal{S}, \hat{\bm{\theta}}), \label{eq:single:new_sol}\\
    & \hat{\mathcal{S}} = \mathcal{S}(\hat{\bm{\theta}}),\label{eq:single:equal}
    \end{align}
where ${\mathcal{S}(\hat{\bm{\theta}})}$ is the solution returned by the greedy algorithm, and the constraint \eqref{eq:single:equal} enforces the solution to be equal to the human suggestion. 
\end{problem}

\subsection{Case II: Multiple human suggestions}\label{sec:problem:multiple_suggestion}
We also consider the case where several human supervisors give multiple suggestions, and they are not sure which one is the most desirable solution. Instead, we assume that the human supervisors will give a score for each of their suggestions to reflect their confidence in the suggestion. We will introduce a novel formulation to handle such a case. Besides, the following formulation is also suitable to model the case where a human supervisor gives multiple suggestions to express the uncertainty in their preferences.

\begin{problem}[2]\label{prob:multiple_suggestion} Given $m$ suggestions $\{(\hat{\mathcal{S}}_1, w_1),  \ldots,  (\hat{\mathcal{S}}_m, w_m)\}$ from human supervisors, find a new parameter vector $\hat{\bm{\theta}}$ such that the following objectives are minimized:
\begin{itemize}
    \item The distance between the original parameter vector $\bm{\theta}$ and the new parameter vector $\hat{\bm{\theta}}$, 
    \item  The weighted sum of differences in the objective values, 
    \item The weighted sum distance between the decisions obtained using $\hat{\bm{\theta}}$ and the human suggestion.
\end{itemize}
 Mathematically, the inverse problem can be formulated as:
\begin{align}
    \min_{\hat{\bm{\theta}}}~ & \norm{\hat{\bm{\theta}} - \bm{\theta}}, \quad \sum_{i=1}^m w_i \epsilon_i^f, \quad \sum_{i=1}^m w_i \epsilon_i^s,  \\
    \rm{s.t.} \quad  {\mathcal{S}}(\hat{\bm{\theta}}) &= \argmaxgreedy_{\mathcal{S}} f(\mathcal{S}, \hat{\bm{\theta}}), \label{eq:multiple:greedy_selection}\\
     \epsilon_i^f &= \norm{f({\mathcal{S}}(\hat{\bm{\theta}}))- f(\hat{\mathcal{S}}_i)} , ~\forall i \in [m], \label{eq:multiple_suggestion_objective_value}\\
     \epsilon_i^s &= d(\mathcal{S}(\hat{\bm{\theta}}), \hat{\mathcal{S}}_i) , ~\forall i \in [m], \label{eq:multiple_suggestion_set_distance}
\end{align}
 where $\epsilon_i^f$ is an auxiliary variable representing the difference in objective values between the suggestion and the solution, and $d(\mathcal{S}_1, \mathcal{S}_2)$ denotes Hamming distance, which is defined as the number of robots whose corresponding path primitives are different, given two selected sets $\mathcal{S}_1$ and $\mathcal{S}_2$. 
 
\end{problem}

The intuition behind the above formulation is as follows. When we have multiple suggestions, we want to find a minimally modified parameter vector (corresponding to the first objective) such that when we make decisions using this new parameter vector, the resulting decision is closer to the suggestion with a higher confidence score w.r.t. some solution metrics. This is reflected in the second and third objectives: the distance function is weighted by the confidence score. As a result, if a particular suggestion $\hat{\mathcal{S}}_i$ has a high confidence score $w_i$, we would expect to get a solution where the corresponding distance metric w.r.t. to the output solution is small, and the corresponding objective values should be close.

\begin{remark}
In Problem \ref{prob:single_suggestion}, we enforce that the solution obtained using $\hat{\bm{\theta}}$ should be the same as the one suggested by the human supervisor (Eqs. \eqref{eq:single:new_sol} and \eqref{eq:single:equal}). Such hard constraints may make the resulting problem infeasible. By contrast, when there are multiple suggestions, as in Problem \ref{prob:multiple_suggestion}, we cannot enforce such hard constraints, which will obviously make the problem unsolvable. Instead, we just want to find the solution and the corresponding $\hat{\bm{\theta}}$ that is close to the human suggestions w.r.t. some metrics (as defined in Eq. \eqref{eq:multiple_suggestion_objective_value} and Eq. \eqref{eq:multiple_suggestion_set_distance}). There is always a feasible solution for such problems. If Problem \ref{prob:single_suggestion} is infeasible and a human supervisor cannot provide an alternative suggestion, we can turn that one suggestion into a special case of Problem \ref{prob:multiple_suggestion} to adapt to their preference. 
\end{remark}

\section{Case Studies}\label{sec:case_studies}
This section presents two case studies that motivate us to propose ISM for multi-robot information gathering. 

\begin{figure}[t]
  \centering
  \includegraphics[width=0.9\columnwidth]{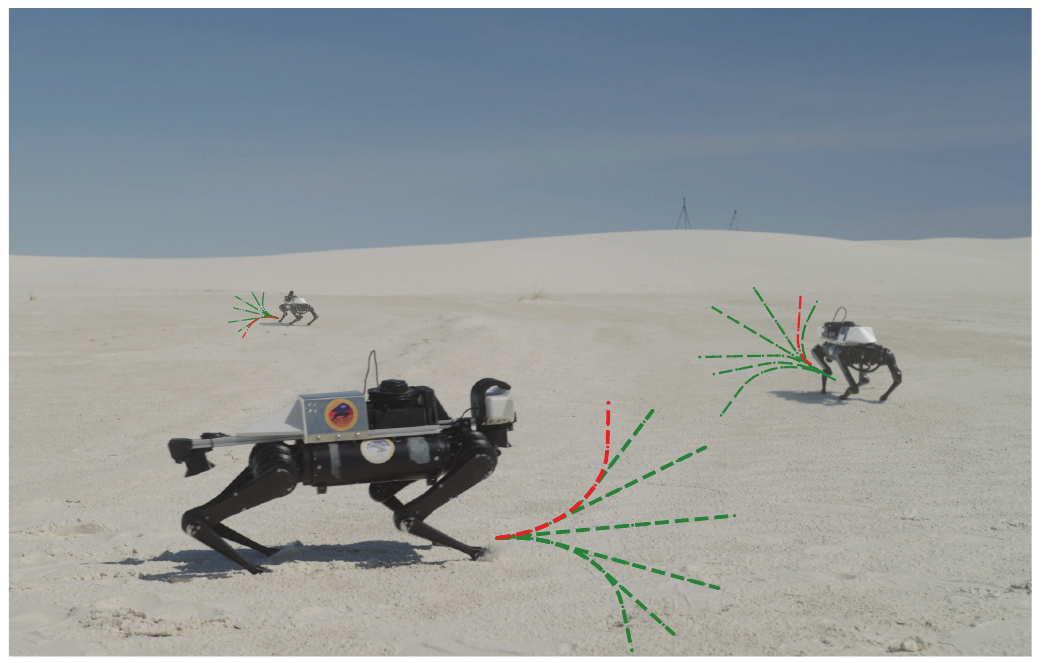}
  \caption{A motivating example of inverse submodular maximization. A team of legged robots uses onboard moisture sensing and proprioceptive feedback from leg–terrain interactions to collect soil information and validate hypotheses about subsurface moisture and strength. Dotted lines denote path primitives available to the robots.
  }
  \label{fig:illustration_data_gathering}
\end{figure}

\subsection{Multi-Robot Scientific Data Collection}\label{sec:case_studies:scientific}
This case study on multi-robot scientific data collection is inspired by \cite{liu2024modelling, liu2024understanding, Rankin2026RobotAssistedExploration, hiatt2011accommodating, candela2017planetary, hollinger2014sampling}.  
Robotic teams are increasingly being deployed to assist human field scientists in Earth and planetary science tasks~\cite{liu2025scout}, where deformable terrain~\cite {liu2023adaptation} is common. The robots can provide geoscientists with both high quality and spatial density surface measurements through
\textit{in-situ} physical interactions~\cite{liu2025adaptive, fulcher2025effect,qian2019rapid, ruck2024downslope}. Having access to these data allows scientists to evaluate hypotheses regarding soil strength, moisture, temperature, and/or microbial activity under field conditions~\cite{rossel2011proximal}; update their conceptual models; and dynamically adjust their data collection approaches to facilitate significant scientific breakthroughs. Generally, such problems are modeled as scalarized multi-objective decision-making problems. The robotic teams need to make a trade-off between obtaining more information about physical, chemical, or biological properties of the soil and exploiting existing information to validate scientists' hypotheses (e.g., dry
soil has the lowest strength, and soil moisture increases with soil strength,
eventually reaching a plateau as soil moisture saturates~\cite{liu2024understanding}). Such trade-offs can be handled by considering human preferences. 

At a high level, we consider a team of $n_r$ robots. Each robot is capable of measuring certain soil properties. At the decision-making moment, each robot has a set of path primitives, each of which will lead the robot to take the measurements along a particular path. The decision-making problem is to select the ``best" path primitive for each robot to collectively optimize the preference-weighted objective.

\textbf{{Environment Models}:} 
The task environment $\Omega \subset \mathbb{R}^2$ is a convex polygon embedded in 2-dimensional space. There is a set of discrete locations of interest $\mathcal{V}$, and we want to estimate field properties at these locations by taking measurement in $\Omega$.  Each location of interest is associated with several environmental properties that can be measured, e.g., soil strength and moisture. Each environmental property in the task environment can be viewed as an unknown function defined over $\Omega$, and we assume that we will use a GP with known parameters to approximately represent such a function. As a result, for any environmental property, the joint distribution at all candidate locations has a (multivariate) Gaussian joint distribution. With this GP representation, if we observe a set of sensor measurements \( \mathbf{Y}_A = \mathbf{y}_A \) corresponding to the finite subset of locations \( X_A \subset \Omega \), we can predict the value at any point \( v \in \mathcal{V} \) conditioned on these measurements, \( P(g(x_v) \mid \mathbf{Y}_A = \mathbf{y}_A) \) using Eq. \eqref{eq:GPR:mean} and Eq. \eqref{eq:GPR:variance}.

\textbf{{Robots and Sensing}:} 
The robots are initially located in one area in $\Omega$. At each decision-making moment, each robot has a set of path primitives, each of which consists of a fixed number of discrete sampling points. We will use $\mathcal{P}$ to denote the set of all path primitives. With slight abuse of notation, we use $p \in \mathcal{P}$ to refer to the path as
well as to the set of vertices visited on the path. The decision here is to determine which primitive to select for each robot.  Each path primitive enables the robots to measure several environmental properties at each point along the path primitive. 

\textbf{Information gathering objective:} Entropy,  \(H(\mathbf{X})\), quantifies the expected amount of information needed to determine the outcome of a random vector $\mathbf{X}$, where a higher entropy indicates greater uncertainty, while a lower entropy indicates a more predictable outcome. Conditional entropy, \(H(\mathbf{X}|\mathbf{Y})\), is the measure of the uncertainty remaining in a random vector \(\mathbf{X}\) given the value of another random vector \(\mathbf{Y}\).
When \(\mathbf{X}\) and \(\mathbf{Y}\) are jointly Gaussian, these two metrics can be computed as 
\begin{equation}\label{eq:entropy}
    H(\mathbf{X}) = - \int p(\mathbf{x}) \log p(\mathbf{x}) d\mathbf{x}=\frac{1}{2}\log ((2\pi e )^{k}\abs{\Sigma_{\mathbf{X}\mathbf{X}}}),
\end{equation}
\begin{equation}\label{eq:conditional_entropy}
    \begin{aligned}
         H(\mathbf{X} \mid \mathbf{Y}) & ~= - \int \int p(\mathbf{x}, \mathbf{y}) \log p(\mathbf{x} \mid \mathbf{y}) d\mathbf{x} d\mathbf{y} \\
         & ~= \frac{1}{2} \log ((2\pi e )^{k}\abs{\Sigma_{\mathbf{X} \mid \mathbf{Y}}}),
    \end{aligned}
\end{equation}
where $k$ is the dimension of \(\mathbf{X}\).

\begin{figure}
\centerline{\includegraphics[scale=0.85]{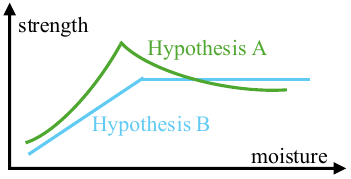}}
\caption{Two hypotheses on the relationship between soil moisture and strength.
}
\label{fig:hypothesis}
\end{figure}

In the context of scientific data collection, GPs are often used to model the environment and guide the sampling process. A typical problem can be abstracted as follows: we need to choose a subset of locations, $X_\mathcal{A}$, from a candidate set of locations, to be used for taking samples 
to maximize the information obtained from the environment. For such problems, a popular choice for the objective defined using GPs is mutual information~\cite{krause2008near, best2019dec, chen2019pareto, hollinger2014sampling}. Given two sets of random variables $\mathbf{X}_\mathcal{A}$ and $\mathbf{X}_\mathcal{B}$ corresponding to function values at locations sets $X_\mathcal{A}$ and $X_\mathcal{B}$, respectively, the mutual information can be computed as:
\begin{equation}
I\bigl( \mathbf{X}_{\mathcal{A}}; \mathbf{X}_{\mathcal{B}}\bigr)  =H\!\bigl(\mathbf{X}_{\mathcal{A}}\bigr) \;-\; H\!\bigl(\mathbf{X}_{\mathcal{A}}\mid \mathbf{X}_{\mathcal{B}}\bigr), 
\end{equation}
where  $H\!\bigl(\mathbf{X}_{\mathcal{A}}\bigr)$ and $H\!\bigl(\mathbf{X}_{\mathcal{A}}\mid \mathbf{X}_{\mathcal{B}}\bigr)$ are defined in Eq. \eqref{eq:entropy} and Eq. \eqref{eq:conditional_entropy}. 

Each environmental property $i$ is associated with a mutual information objective, which we want to maximize:

\begin{equation}\label{eq:objective:mutual_information}
    g_{i}(\mathcal{S}) = I(\mathbf{X}_{\mathcal{S}}; \mathbf{X}_{\mathcal{V}}),
\end{equation}
where $\mathcal{S} \subseteq \mathcal{P}$ denotes the selected subset of path primitives that will be sampled; $\mathcal{V}$ denotes the set of locations of interest. 

Besides the objective of modeling an environmental property, there may also be hypothesis-related objectives. For example, human scientists may hypothesize that the dependence of soil shear strength on moisture can be represented as a piece-wise linear function, as shown in Fig. \ref{fig:hypothesis}, Hypothesis B. After taking a few samples at different locations, we can use GP regression to predict the mean and variance of all locations of interest. Then, we can use the predicted moisture as input together with the hypothesis function to compute a hypothesized soil shear strength~\cite{liu2024understanding}. The difference between the hypothesized value and the GP-predicted value is defined as the discrepancy of that location, i.e.,
\begin{equation*}
    D(v) = |h_j(\mu_m(v))-\mu_s(v)|, ~~\forall v \in \mathcal{V},
\end{equation*}
where $h_j$ denotes the hypothesis function and $\mu_m(v)$  and $\mu_s(v)$ denote the mean value of soil moisture and strength computed using GP regression.

The hypothesis related objective for the set of locations, $\mathcal{V}$, is defined as 
\begin{equation}\label{eq:objective:discrepency}
    g_{j}(\mathcal{S}) = \sum_{s \in \mathcal{S}} \sum_{v \in \mathcal{V}} {D(v)} e^{-\lambda d(s, v)} ~\rm{if} \mathcal{S} \neq \emptyset; ~otherwise ~0,
\end{equation}
where $d(s, v)$ denotes the shortest distance between the primitive $s$ and $v$; $\lambda >0$ is a scaling factor. The intuition of Eq. \eqref{eq:objective:discrepency} is that by maximizing such an objective, we will bias robots towards the nodes with large discrepancy values.

\textbf{{Objective Function}:}
The final objective function is defined as the weighted sum of all objectives defined in Eq. \eqref{eq:objective:mutual_information} and Eq. \eqref{eq:objective:discrepency}, i.e., 
\begin{equation}\label{eq:data_collection_objective}
    f(\mathcal{S}, \bm{\theta}) = \sum_{i} \theta_{i} g_{i}(\mathcal{S}) + \sum_{j} \theta_j g_{j}(\mathcal{S}). 
\end{equation}

\begin{theorem}\label{theorem:submodular_objective_scientific}
    The objective defined in Eq. \eqref{eq:data_collection_objective} is monotone submodular.
\end{theorem}
The proof is given in the Appendix. 

\subsection{Multi-Robot Coverage Control for Event Detection}\label{sec:case_study:coverage}
 This case study on multi-robot multi-objective coverage control is inspired by~\cite{sun2017submodularity, lee2021mobile, zhong2011distributed}.  The goal is to coordinate the motion of robots to detect multiple events as shown in Fig. \ref{fig:illustration_coverage}. Such a problem can be formulated as a maximization problem with a submodular objective, built on the probability of detecting stochastic events.  Such a submodular objective is usually designed offline based on human preferences and prior knowledge of the events. However, human preferences among detecting the various events may change when the human supervisor receives new information online or external instructions (e.g., detect event $i$ and $j$ as soon as possible). In such cases, the human supervisors may give a suggestion indicating which path primitives the robot team should take. Once receiving such suggestions, the robot team will solve the ISM problem to minimally change the objective to adapt to the new suggestion. It should be noted that such suggestions are given only occasionally, not at every planning cycle. When there are no suggestions from humans, the robot team will coordinate by solving the forward optimization problems.

Specifically, there is a team of robots $\mathcal{R}=\{1, \ldots, n_r\}$ in the task environment. Each robot has the sensing capability to detect events, and we aim to coordinate the team to collectively detect independent events $\mathcal{E}=\{1, \ldots, m\}$ in the environment. Each robot has a finite set of available path primitives, each of which spans a planning horizon of length $H$. We consider the action selection problem for the team.  

\textbf{{Environment and Event Models}:}
The task environment $\Omega \subset \mathbb{R}^2$ is a polygon embedded in 2-dimensional space. 
Each event $j \in \mathcal{E}$ is associated with an \textit{event density} function $\phi_j(x): \Omega \to \mathbb{R}_{+}$, which describes the frequency or density of the occurrences of a particular stochastic event $j \in \mathcal{E}$ at the location $x$. Depending on the application, $\phi_j(x)$ can be the frequency that a target shows at $x$, or it can be the probability that some physical quantities, e.g., temperature, humidity, exceed some threshold \cite{li2005distributed}. Generally, $\phi_j(x)$ needs to satisfy two conditions: $\phi_j(x) \geq 0$ and $\int_{\Omega} \phi_j(x) dx < \infty$. In this paper, we are interested in a particular case where $\phi_j$ is a probability distribution over $\Omega$ and $\int_{\Omega} \phi_j(x) dx =1$. When we select path primitives for robots, we need to consider all the events in $\mathcal{E}$. To achieve this, we give each event $j$ an importance factor $\theta_j \in \mathbb{R}_{+}$, which are designed offline, and use a weighted sum approach to balance multiple events. 
We assume that within a planning horizon $H$ the density function $\phi_j(x)$ is time-invariant. 

\textbf{{Robots and Sensing}:}
The position of robot $i \in \mathcal{R}$ is denoted as $x_i \in \Omega$. 
Each robot carries onboard sensors and has a sensing radius $\delta$. Therefore, the sensing region of robot $i$ is $\Omega_i = \{x \in \Omega \mid \norm{x-x_i} \leq \delta \}$. For a planning horizon $H$, a path primitive $p$, is defined as a sequence of positions that the robot will reach in order, i.e., $p=[x_i(t=k), \ldots, x_i(t=k+H)]$. We assume that a lower-level planner can generate a set of feasible path primitives for robots. 
The sensing model of robot $i$ is given as $\text{Pr}(x, x_i)$, the probability that the robot $i$ can detect the event occurrences at $x \in \Omega_i$. If a point is not in the sensing region, the probability is zero. If it is within the sensing region, we assume that  $\text{Pr}(x, x_i)$ is a function of the distance between $x$ and $x_i$, i.e., $\norm{x-x_i}$, and is monotonically decreasing and differentiable. Overall, the sensing model of the robot that we use here is 
\begin{equation}
    \text{Pr}(x, x_i) = 
    \begin{cases}
        \text{exp}(-\xi_i \norm{x-x_i}) ~&\text{if}~ x \in \Omega_i(x_i) \\
        0 ~ &\text{otherwise},
    \end{cases}
\end{equation}
where $\xi_i$ is a decay factor for sensing. 

Combining all the sensing results from all robots, we can compute the joint detection probability that an event at $x \in \Omega$ is detected by the team at time step $t=k$ as, 
\begin{equation}
    \text{Pr}(x, \bm{x}(k)) = 1 - \prod_{i=1}^{n_r} (1-\text{Pr}(x, x_i(k))),
\end{equation}
where $\bm{x}(k)=[x_1(k), \ldots, x_{n_r}(k)]^T$ denotes the position vector of all robots at time $k$.



\textbf{{Objective Functions}:} For time step $k$, we define the event coverage objective of event $j$, as defined in 
\cite{sun2017submodularity}, as 
\begin{equation}\label{eq:single_step_coverage}
    h_j(\bm{x}(k))=\int_{\Omega} \phi_j(x) \text{Pr}(x, \bm{x}(k)) dx.
\end{equation}

 When $\phi_j$ is a probability distribution over $\Omega$, Eq. \eqref{eq:single_step_coverage} yields the probability of detecting the event $j$ if the robot team senses at $\bm{x}(k)$. Therefore, $1-h_j(\bm{x}(k))$ is the probability of not detecting event $j$ at time $k$. We assume the sensing at each time step is independent. The probability of not detecting event $j$ over all $H$ steps is: 
\begin{equation}
    \prod_{t=k}^{t={k+H}} (1-h_j(\bm{x}(t))).
\end{equation}

As a result, the detection probability of an event $j$ over the whole planning horizon is 
$1-\prod_{t=k}^{t={k+H}} (1-h_j(\bm{x}(t)))$.
There are $m$ such events, and each is associated with parameter $\theta_j$ describing its priority.   
Overall, the coverage objective for the team and all events over the whole planning horizon can be expressed as:
\begin{equation}\label{eq:submodular_coverage_objecive}
    f(\mathcal{S}, \bm{\theta}) = \sum_{j=1}^n \theta_j (1-\prod_{t=k}^{t={k+H}} (1-h_j(\bm{x}(t)))),
\end{equation}
where $\mathcal{S}=\{p_1, \ldots, p_{n_r}\}$ is the selected primitive set for the team, robot $i$'s primitive $p_i=[x_i(t=k), \ldots, x_i(t=k+H)]$ consists of a sequence of positions, 
$\mathcal{S}$ can be viewed as a $n \times H$ matrix (each element is a 2D tuple) and $\bm{x}(t)$ is a column, and $\theta_j \in \mathbb{R}_{+}$ is the importance factor for detecting event $j$. 

\begin{theorem}\label{theorem:submodular_objective_horizon}
    The objective defined in Eq. \eqref{eq:submodular_coverage_objecive} is monotone submodular.
\end{theorem}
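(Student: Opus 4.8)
The plan is to prove submodularity by reducing $f$ to a nonnegative combination of integrals of a single elementary ``noisy-or'' coverage function, exploiting the two standard closure properties of monotone submodularity: closure under nonnegative linear combinations, and closure under taking (convergent) integrals of a parameterized family, provided each member of the family is monotone submodular. The elementary building block I would establish first is the following: for any ground set of actions in which each action $a$ carries a failure probability $Q_a \in [0,1]$, the set function $\mathcal{A} \mapsto 1 - \prod_{a \in \mathcal{A}} Q_a$ is monotone submodular. Monotonicity is immediate, since appending an action multiplies the product by $Q_a \le 1$. Submodularity follows from the closed form of the marginal gain, $\Delta(a \mid \mathcal{A}) = (1 - Q_a)\prod_{a' \in \mathcal{A}} Q_{a'}$, whose second factor is nonincreasing in $\mathcal{A}$; hence the gain of adding $a$ can only shrink as the selected set grows, which is exactly the diminishing-returns inequality of the submodularity definition.

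The crux --- and the step I expect to be the main obstacle --- is rewriting the per-event term $F_j(\mathcal{A}) \defeq 1 - \prod_{t=k}^{k+H}(1 - h_j(\bm{p}(t)))$ so that this building block applies. A naive reading suggests a product of $H+1$ submodular quantities, and products of submodular functions are not submodular in general, so the argument cannot proceed termwise. First I would use that $\phi_j$ integrates to one to write $1 - h_j(\bm{p}(t)) = \int_\Omega \phi_j(x)\prod_{a \in \mathcal{A}}(1 - \mathrm{Pr}(x, p_a(t)))\,dx$, where $p_a(t)$ is the position at time $t$ prescribed by action $a$. Then, applying Tonelli's theorem to turn the product of these $H+1$ integrals into one integral over the product space $\Omega^{H+1}$, I obtain $\prod_{t}(1 - h_j(\bm{p}(t))) = \int_{\Omega^{H+1}} \Phi_j(\bm{x}) \prod_{a \in \mathcal{A}} Q_a(\bm{x})\,d\bm{x}$, with $\bm{x} = (x_k,\dots,x_{k+H})$, product density $\Phi_j(\bm{x}) = \prod_t \phi_j(x_t)$, and per-action space-time failure probability $Q_a(\bm{x}) = \prod_t (1 - \mathrm{Pr}(x_t, p_a(t))) \in [0,1]$. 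The legitimacy of this interchange is the delicate point, but it is guaranteed because all integrands are bounded in $[0,1]$ and each $\int_\Omega \phi_j$ is finite.

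With this rearrangement the composition collapses: for each fixed sampled trajectory $\bm{x}$, the integrand $1 - \prod_{a \in \mathcal{A}} Q_a(\bm{x})$ is precisely the noisy-or building block, hence monotone submodular in $\mathcal{A}$, while $\Phi_j \ge 0$ is a probability density on $\Omega^{H+1}$. I would therefore write $F_j(\mathcal{A}) = \int_{\Omega^{H+1}} \Phi_j(\bm{x})\,[\,1 - \prod_{a \in \mathcal{A}} Q_a(\bm{x})\,]\,d\bm{x}$ and invoke closure under nonnegative integration to conclude that each $F_j$ is monotone submodular. Finally, since $f(\mathcal{A}, \bm{\theta}) = \sum_j \theta_j F_j(\mathcal{A})$ with all $\theta_j \in \mathbb{R}_+$, closure under nonnegative linear combinations yields that $f$ is monotone submodular, completing the proof. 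I would also note explicitly that submodularity is asserted on the full power set of actions, the partition-matroid feasibility being enforced separately inside Algorithm \ref{algorithm:greedy}.
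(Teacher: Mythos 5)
Your proof is correct, but it takes a genuinely different route from the paper's. The paper argues by complementation and a product lemma: it cites Theorem 1 of \cite{sun2017submodularity} to get that each single-step coverage term $h_j(\bm{p}(t))$ in Eq.~\eqref{eq:single_step_coverage} is monotone submodular in the selected action set, observes that $1-h_j(\bm{p}(t))$ is therefore non-negative, non-increasing and supermodular, and then invokes Lemma~\ref{lemma:product_submodular} (the product of non-negative, equally-directed monotone supermodular functions is again supermodular) to conclude that $\prod_{t}\bigl(1-h_j(\bm{p}(t))\bigr)$ is non-increasing supermodular, hence its complement is monotone submodular; non-negative weighted summation over events finishes the argument. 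This also shows that your stated obstacle --- that ``the argument cannot proceed termwise'' --- is not quite right: it can, provided one works termwise with the supermodular complements rather than with the submodular terms themselves. Your route instead collapses the whole objective into a non-negative mixture of elementary noisy-or coverage functions: using $\int_\Omega \phi_j\,dx = 1$ and Tonelli's theorem you rewrite $1-\prod_t\bigl(1-h_j(\bm{p}(t))\bigr)$ as $\int_{\Omega^{H+1}}\Phi_j(\bm{x})\bigl[1-\prod_{a\in\mathcal{A}}Q_a(\bm{x})\bigr]\,d\bm{x}$, prove the building block $1-\prod_{a\in\mathcal{A}}Q_a$ is monotone submodular directly from its closed-form marginal gain, and invoke closure of monotone submodularity under non-negative integration and summation. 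What each approach buys: the paper's proof is shorter and modular, reusing a published result and isolating a potentially reusable product lemma; yours is fully self-contained (it needs neither \cite{sun2017submodularity} nor Lemma~\ref{lemma:product_submodular}, and recovers the single-step result as the $H=0$ special case), and it makes explicit that $f$ is a non-negative mixture of canonical coverage functions, which cleanly defines the set function on arbitrary subsets of the ground set with the one-action-per-robot restriction correctly relegated to the matroid constraint in Algorithm~\ref{algorithm:greedy}. The price is that your argument leans harder on the normalization $\int_\Omega\phi_j\,dx=1$ (used twice) and on measure-theoretic bookkeeping, both of which you justify adequately.
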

The proof is given in the Appendix. 

\section{Algorithm: Single Human Suggestion}\label{sec:algorithm_single}

Let us first consider the case where the human suggestion $\hat{\mathcal{S}}$ is an ordered set. Based on Algorithm \ref{algorithm:greedy}, at each step, the element with the largest marginal gain will be selected. If $\hat{\mathcal{S}}$ is an ordered set and is the output of the Algorithm \ref{algorithm:greedy},  for each prefix $\hat{\mathcal{S}}[1:i]$, it should satisfy the following inequality based on line 3 in Algorithm \ref{algorithm:greedy}:
\begin{align}\label{eq:greedy_inequality}
    \begin{aligned}
          f(\hat{\mathcal{S}}[1:i], \hat{\bm{\theta}}) - f(\hat{\mathcal{S}}[1:i-1], \hat{\bm{\theta}})  \geq  \\
          f(\hat{\mathcal{S}}[1:i-1] \cup \{s\}, \hat{\bm{\theta}}) - f(\hat{\mathcal{S}}[1:i-1], \hat{\bm{\theta}}), \\ 
          \forall s \in \{ s \in \mathcal{P} \setminus \hat{\mathcal{S}}[1:i] \mid \{s\} \cup \hat{\mathcal{S}}[1:i-1] \in \mathcal{I} \}.
    \end{aligned}
\end{align}
The left side of the inequality is the marginal gain of adding an element $\hat{\mathcal{S}}[i]$ to the ordered set $\hat{\mathcal{S}}[1:i-1]$. The right side of the inequality is the marginal gain of adding other feasible elements $s$. Intuitively, each  $\hat{\mathcal{S}}[i] \in \hat{\mathcal{S}}$ should be the one with the largest marginal gain in the $i$-th selection step.

For the case that $f$ is a linear function w.r.t. $\hat{\bm{\theta}}$ as shown in Eq. \eqref{eq:linear_submodular}, each inequality \eqref{eq:greedy_inequality} is  a linear inequality:
\begin{equation}\label{eq:linear_inequality}
  \begin{aligned}
        \hat{\bm{\theta}}^{T}g(\hat{\mathcal{S}}[1:i])-\hat{\bm{\theta}}^{T}g(\hat{\mathcal{S}}[1:i-1]) \geq \\
        \hat{\bm{\theta}}^{T}g(\hat{\mathcal{S}}[1:i-1] \cup \{s\}) - \hat{\bm{\theta}}^{T}g(\hat{\mathcal{S}}[1:i-1]).
  \end{aligned}
\end{equation}
There will be $O(\abs{\hat{\mathcal{S}}} \cdot \abs{\mathcal{P}})$ such constraints. Combining all the linear inequalities, the Ordered-set variant of Problem 1 boils down to a convex optimization problem as follows.

\begin{subproblem}[Ordered-ISM (O-ISM)]\label{problem:ordered_ism}
    \begin{align}
    \min_{\hat{\bm{\theta}}}~ &\norm{\hat{\bm{\theta}} - \bm{\theta}} \label{eq:min_obj_fix_order}\\
    \rm{s.t.} \quad & \hat{\bm{\theta}}^T b_j \leq 0,  \forall j  \label{eq:all_linear_inequality}
    \end{align}
where $b_j$ denotes a coefficient vector corresponding to Eq. \eqref{eq:linear_inequality}, and $j$ is the index for all linear inequalities.
\end{subproblem}

It should be noted that we can define an O-ISM problem for any ordered set other than $\hat{\mathcal{S}}$. In the rest of this paper, we will treat O-ISM as a function: for an ordered set $\mathcal{A}$, which is assumed to be returned from the greedy algorithm for some parameter $\hat{\bm{\theta}}$, O-ISM($\mathcal{A}$) denotes the solution to Subproblem 
\ref{problem:ordered_ism} for the set $\mathcal{A}$. We will use this in Algorithm \ref{algorithm:BB_algorithm}.

When the human suggestion $\hat{\mathcal{S}}$ is not an ordered set, each possible ordering corresponds to a set of constraints as in Eq. \eqref{eq:all_linear_inequality}. Such a case is more practical in applications: human operators know the solutions based on their expertise and observation, but they do not have the concept of the ordering of a solution set because the ordering of the solution requires reasoning over how the approximate algorithm works, which is not involved in the human decision-making process. All these sets of constraints can be connected using OR logic similar to disjunctive inequalities. We can use the Big-$M$ reformulation technique to formulate the problem as a Mixed Integer Quadratic Programming (MIQP) problem.

\begin{subproblem}[Unordered-ISM (U-ISM)]\label{problem:unordered_ism}
    \begin{align}
    \min_{\hat{\bm{\theta}}}~ &\norm{\hat{\bm{\theta}} - \bm{\theta}} \label{eq:deviation_obj_unordered}\\
    \rm{s.t.} \quad & \hat{\bm{\theta}}^T b_j^k \leq M(1-y_k),  ~\forall j, ~\forall k  \label{eq:sequence_inequality_unordered}\\
    & \sum_k y_k = 1,  \quad y_k \in \{0, 1\}\label{eq:integer_constraints_unordered},
    \end{align}
where $k$ is the index for the possible ordering of $\hat{\mathcal{S}}$, $j$ is the index for all linear inequalities corresponding to an ordering, $y_k$ is a binary variable to indicate which ordering constraint is active, and $M$ is a large enough positive number. 
\end{subproblem}
 In Subproblem \ref{problem:unordered_ism}, we add a binary variable $y_k$ to indicate whether an ordering is active. If $y_k = 1$, then the inequalities for $k$-th ordering is active in Eq. \eqref{eq:sequence_inequality_unordered}, i.e., the RHS of Eq. \eqref{eq:sequence_inequality_unordered} is zero. If $y_k = 0$,  the RHS of Eq. \eqref{eq:sequence_inequality_unordered} is a large number $M$, the inequalities are trivially satisfied. We enforce that there should be only one active ordering in Eq. \eqref{eq:integer_constraints_unordered}.

\begin{figure}
\centerline{\includegraphics[scale=0.6]{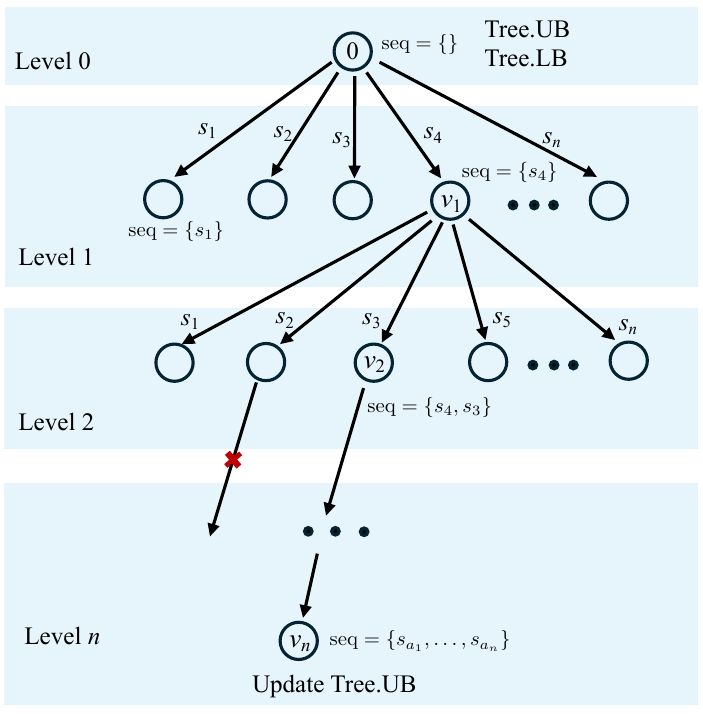}}
\caption{One iteration of the proposed BB-ISM algorithm. Black arrows correspond to node expansion operations, while the red cross corresponds to a pruning operation.}
\label{fig:BB}
\end{figure}

It should be noted that there are exponentially many constraints and integer variables in Eq. \eqref{eq:sequence_inequality_unordered}. The brute-force way to solve the problem is to explicitly list all the constraints and use an existing solver, e.g., Gurobi, to find the solution. However, it is both time-consuming and memory-consuming to explicitly list all the constraints. Besides, the solver would not leverage
structural information about these binary variables, i.e., that they correspond to the orderings of a set. Therefore, we develop a branch and bound algorithm to solve this type of problem without explicitly listing all the constraints. The main idea is that instead of treating each possible ordering of $\hat{\mathcal{S}}$ as a candidate and searching for the best one, we incrementally add elements to $\hat{\mathcal{S}}$ to form a sequence of relaxed problems. By keeping track of the solutions from these relaxed problems, we can gradually find the upper and lower bounds of the original objective,
and prune the suboptimal solutions. The incremental search is conducted by growing a search tree in depth-first-search fashion. An illustrative example is shown in Fig. \ref{fig:BB}. At the root node, we start with an empty sequence. For the next level (Level 1) of the tree, we can add any element in ${\hat{\mathcal{S}}}$ to the sequence to form a new node. For each such node, we will solve Subproblem O-ISM(\textit{seq}) using the \textit{seq} property of the node, which keeps track of the ordered sequence up to that node starting from the root as illustrated in Fig. \ref{fig:BB}. When the \textit{seq} includes only some of the elements in $\hat{\mathcal{S}}$, the objective value returned by solving O-ISM(\textit{seq}) lower bounds
all cases where the orderings of $\hat{\mathcal{S}}$ start with \textit{seq}, since the ordering with all elements of $\hat{\mathcal{S}}$ has additional constraints. 

\begin{algorithm}[ht]\label{algorithm:BB_algorithm}
    \caption{BB for ISM (BB-ISM)}
    \SetKwInOut{Input}{Input}
    \SetKwInOut{Output}{Output}
    \SetKwProg{Fn}{Function}{:}{}
    \Input{Problem instance with human suggestion $\hat{\mathcal{S}}$
    }
    \Output{$\hat{\bm{\theta}}$}
    Tree $\gets$ empty tree \quad \# Initialize a search tree \\
    Tree.UB $\gets$ a large number \\
    Tree.add\_node(node\_ID = 0, sequence = $\{\}$) \\
    \nonl \# Initialize an empty stack for Depth First Search \\
    Stack $\gets$ empty stack \\
    Stack.push(Tree.get\_node(node\_ID = 0)) \\
    \While{Stack is not empty}{
    $u$ $\gets$ Stack.pop() \\
    PQ $\gets$ priority\_queue() \\
    \For{$s \in \hat{\mathcal{S}} \setminus {u\rm{.seq}}$ }
    {
    feasible, obj, $\hat{\bm{\theta}}$ $\gets$ O-ISM($u\rm{.seq} + \{s\}$)\\
        \If{feasible}{
        PQ.insert($s$, $\hat{\bm{\theta}}$, priority\_value=obj)   
        }
    }
    \While{PQ is not empty}{
        $s$, $\bm{\theta}$, obj $\gets$ PQ.pop() \\
        \If{obj $<$ Tree.UB}{
            Tree.update\_UB($u$.seq+$\{s\}$, obj) \\
            new\_node $\gets$ Tree.add\_branch($u$, $s$, obj, $\bm{\theta}$) \\
            Stack.push(new\_node)
        } 
    }
    }
    \textbf{return} $tree$ \\
     \SetKwFunction{updateUB}{update\_UB}
    \Fn{\updateUB{$Tree, ~sequence$, obj}}{
    \If{length of $sequence$ = length of $\hat{{\mathcal{S}}}$}
        {
            \If{obj $<$ $Tree$.UB}
            {
                $Tree$.UB $\gets$ obj \\
                $Tree$.UB\_seq $\gets$ $sequence$
            }
        }
    }
    \textbf{end} 
    
\end{algorithm}

Then, we will choose the node (node $v_1$ in Fig. \ref{fig:BB}) with the smallest objective value (returned by solving O-ISM(\textit{seq})) for further expansion.
Here, we use the objective value of O-ISM(\textit{seq}) as a heuristic to guide search and use a greedy strategy to expand to the next level. The expansion is similar to that from Level 0 to Level 1. There is only one element $s_4$ in the \textit{seq} of the node $v_1$. So we can add any elements in ${\hat{\mathcal{S}}} \setminus \{s_4\}$ to the sequence to form a new child node on Level 2 and solve a corresponding problem O-ISM(\textit{seq}). Such a process will repeat until the \textit{seq} of the node includes all elements in ${\hat{\mathcal{S}}}$, which means that an ordering of ${\hat{\mathcal{S}}}$ is found. We will use the objective value, returned by solving O-ISM(\textit{seq}), of this node to update the upper bound of the problem (Tree.UB in the root node), since this solution is for a particular ordering of ${\hat{\mathcal{S}}}$, while the optimal solution corresponding to an optimal ordering minimizes the objective under the constraints.

After growing the tree to a full candidate ordering of $|\hat{\mathcal{S}}|$, we 
return to the previous node level and expand,
similar to Depth First Search (DFS). As we grow the search tree, if a particular O-ISM(\textit{seq}) problem is infeasible, which implies all orderings with prefix \textit{seq} are infeasible, or if the objective value returned from O-ISM(\textit{seq}) exceeds the tree's upper bound identified so far, which implies that no ordering with prefix \textit{seq} can outperform the best solution identified so far, we will prune that branch as shown in Fig. \ref{fig:BB} (the red cross prunes the branch). Such pruning operations accelerate the search.

The details are shown in the Algorithm \ref{algorithm:BB_algorithm}. In lines 1-3, we initialize an empty tree with a root node whose \textit{seq} property is an empty ordered set. The upper bound property is initialized to a large number, and will be updated as the search tree grows. In lines 4-5, we initialize a stack for a DFS-style search and push the root node to the stack. In the while loop, we first pop the top element from the stack (line 7) and initialize a priority queue (line 8). Then each element $s \in \hat{\mathcal{S}}$ that is outside $u$.seq (line 9) is checked for feasibility to solve a relaxed O-ISM problem if appended to the existing sequence (line 10). If infeasible, then no sequence with such a prefix is possible, so we prune this branch by ignoring it. If feasible, we insert element $s$ and the corresponding $\hat{\bm{\theta}}$ into the priority queue, using the objective value given by O-ISM as the priority value. After this, we update the search tree (lines 15-23), first expanding branches in increasing order of the objective value (line 16). For each candidate, we first check whether to prune it (line 17) by comparing the objective value with the tree's upper bound. If the objective value exceeds the upper bound, we prune branches with this sequence as prefix (i.e., they are ignored in line 17). This is because the relaxed O-ISM problem returns a lower bound on the objective value over all orderings of $\hat{\mathcal{S}}$ with prefix $u$.seq: the complete ordering adds more constraints, increasing the objective. 
Importantly, the tree's upper bound represents the best solution found so far, while any other orderings generating solutions worse than the identified ones should be pruned. By contrast, if the objective does not exceed the tree's upper bound
(line 17), we update the search tree's upper bound and add the new branch to the tree (line 19). The DFS-style search pushes the new nodes to the stack (line 20). After considering all elements in the priority queue, we continue the while loop, popping the element at the top of the stack (line 7) and repeating the process.      

\begin{theorem}
    Given a feasible problem instance as described in Subproblem \eqref{problem:unordered_ism}, Algorithm \ref{algorithm:BB_algorithm} returns the optimal solution in finite iterations of the outer while loop (lines 6-24).
 \end{theorem}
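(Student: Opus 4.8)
The plan is to prove the two halves of the statement—finiteness of the outer loop and optimality of the returned incumbent—after first re-expressing the value of U-ISM (Problem \ref{problem:unordered_ism}) in the form that Algorithm \ref{algorithm:BB_algorithm} actually manipulates. First I would observe that, since the binary variables obey $\sum_k y_k = 1$ in \eqref{eq:integer_constraints_unordered} and the Big-$M$ term in \eqref{eq:sequence_inequality_unordered} renders every constraint of an inactive ordering vacuous (for $M$ chosen large enough relative to the bounded neighborhood of $\bm{\theta}$ induced by the objective $\norm{\hat{\bm{\theta}}-\bm{\theta}}$), U-ISM is exactly the disjunctive program of selecting one ordering and optimizing over its O-ISM feasible set. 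Hence its optimal value equals $\min_k v(k)$, where $k$ ranges over the orderings of $\hat{\mathcal{S}}$ and $v(k)$ is the optimal value of the convex problem O-ISM (Problem \ref{problem:ordered_ism}) for the full ordering $k$, with the convention $v(k)=+\infty$ when O-ISM$(k)$ is infeasible. Feasibility of the instance makes $v^\star := \min_k v(k)$ finite. The engine of everything that follows is a monotonicity lemma: for an ordered prefix $\sigma$ and any full ordering $k$ beginning with $\sigma$, the greedy inequalities \eqref{eq:greedy_inequality} imposed by O-ISM$(\sigma)$ are a subset of those imposed by O-ISM$(k)$, so the feasible set of O-ISM$(k)$ is contained in that of O-ISM$(\sigma)$ and therefore $L(\sigma)\le v(k)$, writing $L(\sigma)$ for the value of O-ISM$(\sigma)$. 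In particular, infeasibility of O-ISM$(\sigma)$ forces infeasibility of every extension, and $L(\sigma)$ is a valid lower bound for all completions of $\sigma$; this is exactly what legitimizes both pruning rules.

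For termination I would identify each tree node with a distinct ordered prefix of $\hat{\mathcal{S}}$, of which there are at most $\sum_{i=0}^{n} n!/(n-i)!$ with $n=\abs{\hat{\mathcal{S}}}$, a finite number. Each prefix has a unique parent (delete its last element), so the search structure is genuinely a tree, and each node is created by \texttt{add\_branch} and pushed onto the stack at most once. Since every outer iteration (lines 6--24) pops exactly one node and re-pushes none of the already-processed nodes, the number of outer iterations is bounded by the number of tree nodes and is thus finite; the inner work per iteration (at most $n$ O-ISM solves plus priority-queue operations) is likewise finite.

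The crux is optimality, which I would establish through invariants maintained across outer iterations. The first invariant is that \texttt{Tree.UB} is always an upper bound on $v^\star$: it is modified only inside \texttt{update\_UB}, which fires only at a node whose sequence is a complete ordering and sets the value to some $v(k)\ge v^\star$; it starts at a large number assumed to exceed $\max_k v(k)$ and is non-increasing. The second invariant is that every complete ordering is either evaluated for an \texttt{update\_UB} or pruned at one of its ancestor prefixes. To finish, take an optimal ordering $k^\star$ with $v(k^\star)=v^\star$. By the monotonicity lemma every prefix of $k^\star$ is feasible, so $k^\star$ is never removed by the infeasibility rule. If instead $k^\star$ is pruned by the bound rule at some prefix $\sigma$—which on line 17 happens only when $L(\sigma)\ge\texttt{Tree.UB}$ at that moment—then combining $L(\sigma)\le v(k^\star)=v^\star$ with the upper-bound invariant $\texttt{Tree.UB}\ge v^\star$ gives $L(\sigma)\le v^\star\le \texttt{Tree.UB}\le L(\sigma)$, forcing $\texttt{Tree.UB}=v^\star$ already at that instant. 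Since \texttt{Tree.UB} is non-increasing and never drops below $v^\star$, it stays equal to $v^\star$. In the remaining case $k^\star$ is never pruned, its leaf is evaluated, and \texttt{update\_UB} drives $\texttt{Tree.UB}\le v^\star$; with the invariant this again yields equality. Either way, at termination $\texttt{Tree.UB}=v^\star$, and \texttt{Tree.UB\_seq} together with the stored $\hat{\bm{\theta}}$ from its O-ISM solve is an optimal solution.

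The step I expect to be the main obstacle is this final pruning-correctness argument: one must rule out that the bound rule discards the optimum prematurely, and the delicate point is precisely the interplay just described between the lower-bound lemma and the upper-bound invariant, which shows that whenever an optimal branch is cut, the incumbent has already attained the optimal value. A secondary, more routine technicality is verifying that the Big-$M$ reduction to $\min_k v(k)$ is exact, i.e.\ that $M$ is large enough that inactive orderings impose no binding cut near the optimum.
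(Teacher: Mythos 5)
Your proof is correct and takes essentially the same approach as the paper, which in fact never writes out a formal proof of this theorem: the justification lives in the algorithm-description text, and its two key ideas---that O-ISM on a prefix lower-bounds every completion because a full ordering only adds constraints (so infeasibility of a prefix kills all completions), and that bound-based pruning therefore never discards an optimal ordering unless the incumbent already equals the optimum---are precisely the monotonicity lemma and pruning-correctness invariant you formalize. Your additions (the finite count of ordered prefixes for termination, the non-increasing upper-bound invariant, and the check that the Big-$M$ reformulation is exactly the disjunction $\min_k v(k)$) rigorously fill in what the paper leaves implicit.
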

Like all algorithms developed within the BB paradigm, the BB-ISM algorithm, by nature, enumerates all possible combinations by incrementally adding elements. The efficiency relies on the pruning steps to remove unnecessary expansion of the search tree.

\begin{remark}
    It should be noted that Subproblem \ref{problem:unordered_ism} is an NP-hard problem in general (mixed integer quadratic programming). In the worst case, the proposed algorithm will have to enumerate all possible orderings of the human suggestion to find the optimal solution or find that the problem is infeasible. We will experimentally compare BB-ISM with the brute-force approach using Gurobi in Section \ref{sec:experiment}. Moreover, the approach developed in this section cannot be directly transferred to solve Problem \ref{prob:multiple_suggestion} (multiple human suggestions) for the following reasons. First, we cannot define a subproblem like O-ISM(\textit{seq}) in Problem \ref{prob:multiple_suggestion} for pruning. Eq. \eqref{eq:multiple_suggestion_objective_value} and Eq. \eqref{eq:multiple_suggestion_set_distance} require a complete ordered set for evaluation. Moreover, because there are multiple objectives in Problem \ref{prob:multiple_suggestion}, the pruning criterion in Algo. \ref{algorithm:BB_algorithm} cannot be used. 
\end{remark}

\section{Algorithm: Multiple Human Suggestions}\label{sec:algorithm_multiple}
This section presents a Pareto MCTS-based solution to solve Problem \ref{prob:multiple_suggestion}. The insight is that Problem \ref{prob:multiple_suggestion} can be viewed as a sequential decision-making problem to find an ordered set ${\mathcal{S}}(\hat{\bm{\theta}})$ (defined in Eq. \eqref{eq:multiple:greedy_selection}). Given such an ordered set, the second and third objectives in Problem 2 together with Eq. \eqref{eq:multiple_suggestion_objective_value} and \eqref{eq:multiple_suggestion_set_distance} are determined, since these no longer depend on $\hat{\bm{\theta}}$ given an order of selected elements. We only need to focus on the first objective and Eq. \eqref{eq:multiple:greedy_selection}, which turns out to be an instance of Subproblem \ref{problem:ordered_ism}. Therefore, as long as we can find the Pareto optimal ordered set ${\mathcal{S}}(\hat{\bm{\theta}})$, the problem is solved. However, the brute-force integer programming based approach to finding such an ordered set involves enumerating all possible combinations of elements and their orderings, which is computationally inefficient. In the following, we first explain concepts related to Pareto MCTS~\cite{chen2019pareto, wang2012multi} and then show how to leverage Pareto MCTS to approximately find such an ordered set ${\mathcal{S}}(\hat{\bm{\theta}})$. 

\subsection{Pareto Optimality}
Here we use the same definitions as those defined in \cite{chen2019pareto}. 
\begin{definition}[Pareto Dominance]
Let $\mathbf{Z}_k \in \mathbb{R}^D$ be the $D$-dimensional reward vector associated with the choice $k$, and let $Z_{k,d}$ denote its $d$-th component.  
We say that choice $i$ \emph{dominates} choice $j$, written $i \succ j$ or $j \prec i$, if and only if the following conditions hold:
\begin{enumerate}
    \item $Z_{i,d} \ge Z_{j,d}$ for all $d \in \{1,\ldots,D\}$; and
    \item There exists at least one dimension $d \in \{1,\ldots,D\}$ such that $Z_{i,d} > Z_{j,d}$.
\end{enumerate}

If only condition (1) is satisfied, we say that choice $i$ \emph{weakly dominates} choice $j$, denoted $i \succeq j$ or $j \preceq i$.

In some cases, neither $i \succeq j$ nor $j \succeq i$ holds.  
Then we say that choice $i$ is \emph{incomparable} with choice $j$, written $i \parallel j$, if and only if
there exist two dimensions $d_1, d_2 \in \{1,\ldots,D\}$ such that
$Z_{i,d_1} > Z_{j,d_1}$ and $Z_{i,d_2} < Z_{j,d_2}$.

Finally, choice $i$ is said to be \emph{non-dominated} with respect to choice $j$ (denoted $i \neq j$ or $j \not\succ i$) if and only if there exists at least one dimension $d$ such that $Z_{i,d} > Z_{j,d}$.
\end{definition}

\begin{definition}[Pareto-Optimal Node Set]
Let $\mathcal{V}$ be a set of nodes. A subset $\mathcal{P}^\star \subset \mathcal{V}$ is the
\emph{Pareto-optimal node set} (with respect to expected reward) if and only if
\[
\left\{
\begin{aligned}
&\forall\, v_i^\star \in \mathcal{P}^\star \ \text{and}\ \forall\, v_j \in \mathcal{V},\quad
v_i^\star \not\prec v_j,\\[2pt]
&\forall\, v_i^\star, v_j^\star \in \mathcal{P}^\star,\quad
v_i^\star \parallel v_j^\star .
\end{aligned}
\right.
\]
Here, $a \prec b$ denotes that $b$ (strictly) dominates $a$, and $a \parallel b$ denotes that $a$ and $b$ are incomparable.
\end{definition}

\begin{algorithm}[ht]\label{MCTS_algorithm}
    \caption{Pareto Monte Carlo Tree Search}
    \SetKwInOut{Input}{Input}
    \SetKwInOut{Output}{Output}
    \SetKwProg{Fn}{Function}{:}{}
    \SetKwFunction{FMain}{Pareto-MCTS}
    \Input{Instance of Problem \ref{prob:multiple_suggestion}}
    \Output{An ordered set}
    $\mathcal{S} \gets \emptyset$, $tree ~\gets empty$ \\
    \While{$\abs{\mathcal{S}} \neq n_r$}{
        $tree~ \gets$ \texttt{Pareto-MCTS}($tree$)  \\
        $s ~\gets $ SelectChild($tree$) \\
        $\mathcal{S} \gets \mathcal{S} \cup \{s\}$ \\
        $tree~ \gets$ PruneTree($tree$)
    }
    \textbf{return} $\mathcal{S}$ \\
    \Fn{ \FMain{tree} }{
    \While{computational budget not used up}{
    $v_{sel} \gets ~\text{Selection}(tree, ~v_0)$ \\
    $v_{exp} \gets ~\text{Expansion}(tree, ~v_{sel})$ \\
    $Reward \gets ~\text{Simulation}(tree, ~v_{exp})$ \\
    $\text{Backpropagation}(tree, ~Reward, ~v_{exp})$
    }
    \textbf{return} $tree$
    }
    \textbf{end} \\
\end{algorithm}

\begin{figure}
\centerline{\includegraphics[scale=0.36]{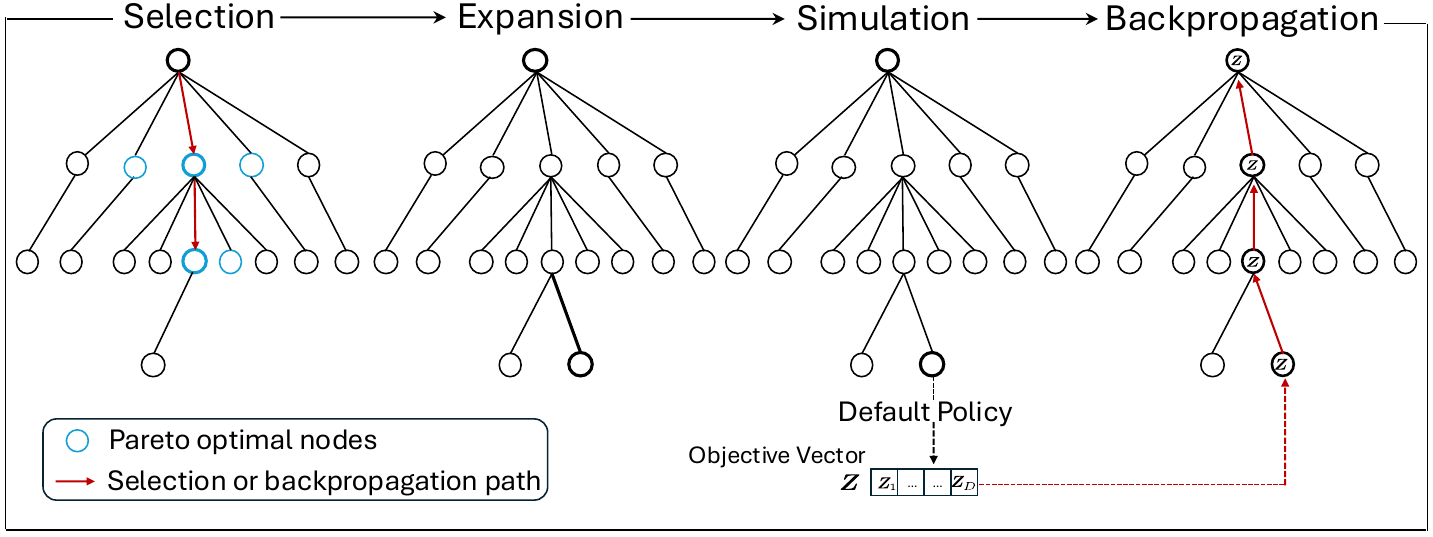}}
\caption{One iteration of the Pareto MCTS.
}
\label{fig:MCTS}
\end{figure}
\subsection{Pareto Monte Carlo Tree Search}
Monte Carlo Tree Search is a heuristic decision-making algorithm widely used in game-playing AI, planning, and optimization problems~\cite{browne2012survey}. It is also widely used in robotics applications, including routing~\cite{zhang2021game, shi2023robust}, active parameter estimation~\cite{slade2017simultaneous}, environment monitoring~\cite{marchant2014sequential}, and multi-robot active perception~\cite{best2019dec}.
It builds a search tree incrementally by simulating many random rollouts from possible action choices and using the outcomes to estimate the value of each choice. Through repeated exploration, MCTS balances trying promising actions (exploitation) and exploring less-visited options (exploration), typically using the Upper Confidence Bound (UCB) formula~\cite{browne2012survey}. This balance allows the algorithm to focus computational effort on the most relevant parts of the search space. As shown in Fig. \ref{fig:MCTS}, there are four basic steps in each iteration of an MCTS process: selection, expansion, simulation, and backpropagation.  

Whereas the classic MCTS algorithm is designed to maximize one reward objective, Problem \ref{prob:multiple_suggestion} has multiple objectives. We therefore adapt its Pareto variant to find the Pareto optimal decisions~\cite{chen2019pareto, wang2012multi}. The main idea is to incrementally construct an ordered set ${\mathcal{S}}(\hat{\bm{\theta}})$ as defined in Eq. \eqref{eq:multiple:greedy_selection}. Starting from an empty set, i.e., ${\mathcal{S}} (\hat{\bm{\theta}})=\emptyset$ (line 1, Algo. \ref{MCTS_algorithm}), we add one element to it during each step. To do this, the algorithm will incrementally grow the search tree with some computational budget (line 3, Algo. \ref{MCTS_algorithm}) and then select
one candidate element from the Pareto optimal child set of the root node using
the average reward of each child (lines 4-5, Algo. \ref{MCTS_algorithm}). Then, we prune other branches of the search tree and only keep the selected branch (line 6, Algo. \ref{MCTS_algorithm}). If the size of ${\mathcal{S}}(\hat{\bm{\theta}})$ is still less than $n_r$ (line 2, Algo. \ref{MCTS_algorithm}), we continue to grow the pruned search tree in the next round to select the next element to add. Such a process continues until the size of  ${\mathcal{S}}(\hat{\bm{\theta}})$ is $n_r$. Details of the subroutines (Algo. \ref{MCTS_subroutine}) used in the search process are explained below.

\begin{algorithm}[ht]\label{MCTS_subroutine}
    \caption{Monte Carlo Tree Search Subroutines}
    \SetKwFunction{Selection}{Selection}
    \SetKwProg{Fn}{Function}{:}{}
    \Fn{\Selection{$tree, ~v$}}{
    \While{$v$ is fully expanded}{
       compute  Pareto UCB for each child $k$:
       \begin{equation}
           \bm{U}(k) = \frac{v_k.\bm{Z}}{v_k.n}+c\sqrt{\frac{4\ln n + \ln D}{2v_k.n}}.
       \end{equation} \\
       construct local approximate Pareto optimal node set $v.\mathcal{P}$ using $\bm{U}(k)$. \\
       sample a node $v_{sel}$ from $v.\mathcal{P}$ uniformly 
    }
    \textbf{return} $v$
    }
    \textbf{end} \\
    \nonl ~ \\
    \SetKwFunction{Expansion}{Expansion}
    \Fn{\Expansion{$tree, ~v$}}{
    \If{level($v$) != TERMINAL}{
    Add an unexpanded child node $v^{\prime}$ of $v$ to $tree$ \\
    $v \gets v^{\prime}$
    }
     \textbf{return} $v$
    }
    \textbf{end} \\
    \nonl ~ \\
    \SetKwFunction{Simulation}{Simulation}
    \Fn{\Simulation{$tree, ~v$}}{
    \While{level($v$) $\neq$ TERMINAL}{
      $v \gets$ DefaultPolicy($v$)
    }
     compute objective vector $\bm{Z}$ \\
    \textbf{return} -$\bm{Z}$
    }
    \textbf{end} \\
    \nonl ~ \\
    \SetKwFunction{Backpropagation}{Backpropagation}
    \Fn{\Backpropagation{$tree, ~Reward,~v$}}{
    \While{$v \neq $ NULL}{
    \nonl // update total reward value
    $tree.v.\bm{U} \gets tree.v.\bm{U} + Reward$ \\ 
    $tree.v.n \gets tree.v.n + 1$ \\
    
    }
    }
    \textbf{end}
\end{algorithm}

\noindent \textbf{\textit{Selection}} (line 11 in Algo. \ref{MCTS_algorithm}; lines 1-8 in Algo. \ref{MCTS_subroutine}): Starting from the root node, a selection procedure is recursively applied until some leaf node is reached. In each recursion, a child node is selected based on Pareto UCB, which extends the scalar case proposed by Kocsis and Szepesvári \cite{kocsis2006bandit} to the vector case. The UCB value is a summation of two terms: exploitation and exploration. The exploitation term corresponds to the average rollout reward obtained, and the exploration term is decided by the number of times that the node has been visited ($n(v^{\prime})$) and the number of times that the current node's parent has been visited (i.e., $n=\sum_k v_k.n$ where $v_k$ denotes the node corresponding to child $k$). If a node is less visited, the exploration value will increase, encouraging the selection of that node. 

\noindent \textbf{\textit{Expansion}} (line 12 in Algo. \ref{MCTS_algorithm}; lines 9-15 in Algo. \ref{MCTS_subroutine}): One (or more) child nodes are added to the tree based on the available actions. If the node is at the terminal level, e.g., no more action budget, the current node is returned (lines 10-13). Otherwise, the expansion step adds one of the current node's children to the tree and returns that child node (line 14).

\noindent \textbf{\textit{Simulation}} (line 13 in Algo. \ref{MCTS_algorithm}; lines 16-22 in Algo. \ref{MCTS_subroutine}): A forward rollout is conducted from the chosen node using the default policy until it reaches a terminal node. The obtained reward can be computed as follows. Each terminal node corresponds to a particular ordered set ${\mathcal{S}}(\hat{\bm{\theta}})$. We can use this ordered set to compute the equality constraints Eq. \eqref{eq:multiple_suggestion_objective_value} and \eqref{eq:multiple_suggestion_set_distance} and therefore the last two objectives. After this, we need to consider the first objective and Eq. \eqref{eq:multiple:greedy_selection}, which is an instance of Subproblem \ref{problem:ordered_ism} and can be solved easily by convex optimization solvers. Then, we will get a vector for each objective value, and we will return the negated value of this vector. 

\noindent \textbf{\textit{Backpropagation}} (line 7 in Algo. \ref{MCTS_algorithm}; lines 23-27 in Algo. \ref{MCTS_subroutine}): The simulation result is propagated back to the root to update node information along the propagation path.

\section{Experiments}\label{sec:experiment}
We validate the proposed ISM formulation and the BB-ISM algorithm in two case studies on multi-robot scientific data collection and multi-objective coverage control for event detection as described in Sec. \ref{sec:case_studies}. We will first present illustrative examples to show how the ISM and human suggestions affect the behaviors of robots. Then, we will quantitatively analyze the ISM framework's adaptation to human preference by comparing with baseline approaches. Moreover, we evaluate the proposed algorithm (BB-ISM) in terms of its optimality, running time, and peak memory usage as the problem size increases.

\subsection{Multi-Robot Scientific Data Collection}
\begin{figure*}[ht]
    \centering
    \subfloat[]{
    \includegraphics[width=0.295 \textwidth]{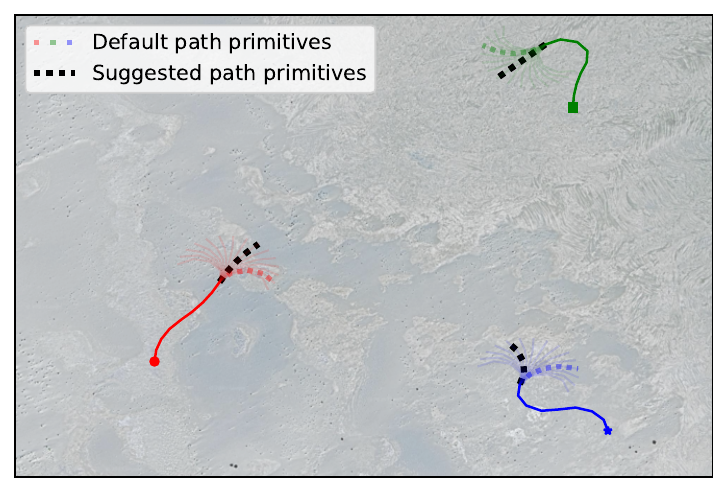}
    \label{fig:first_suggestion}
    } 
    \subfloat[]{
    \includegraphics[width=0.295\textwidth]{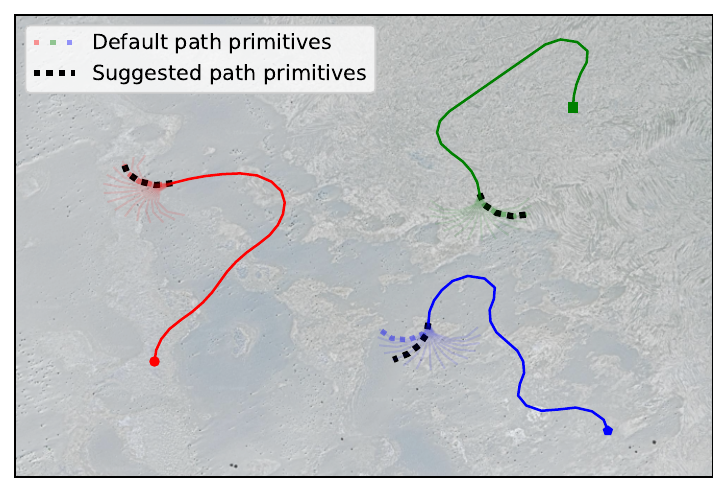}
    \label{fig:second_suggestion}
    }
    \subfloat[]{
    \includegraphics[width=0.295\textwidth]{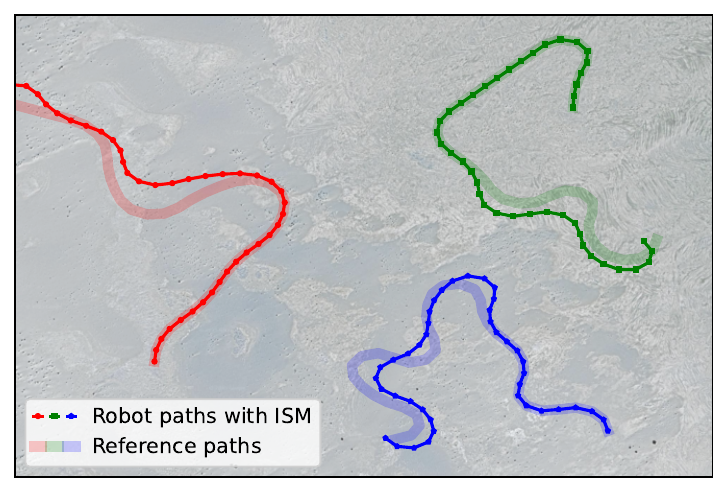}
    \label{fig:human_ISM}
    } \\
    \subfloat[]{
    \includegraphics[width=0.295 \textwidth]{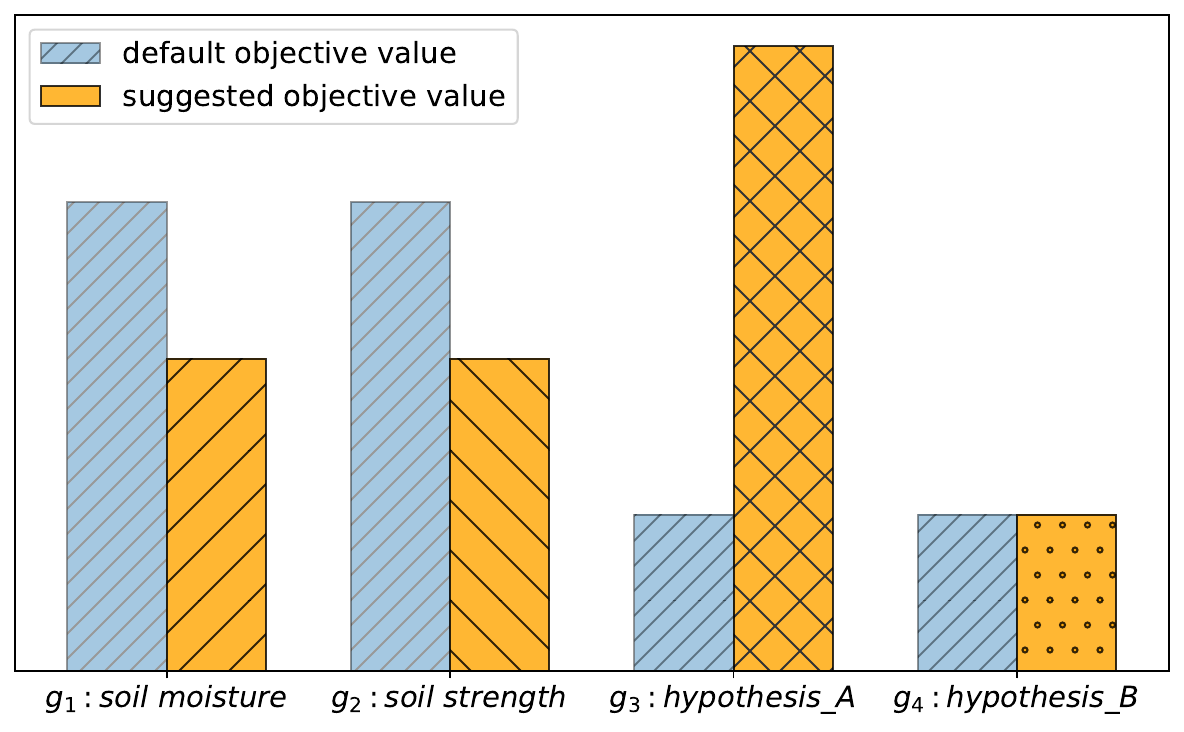}
    \label{fig:first_obj_change}
    } 
    \subfloat[]{
    \includegraphics[width=0.295\textwidth]{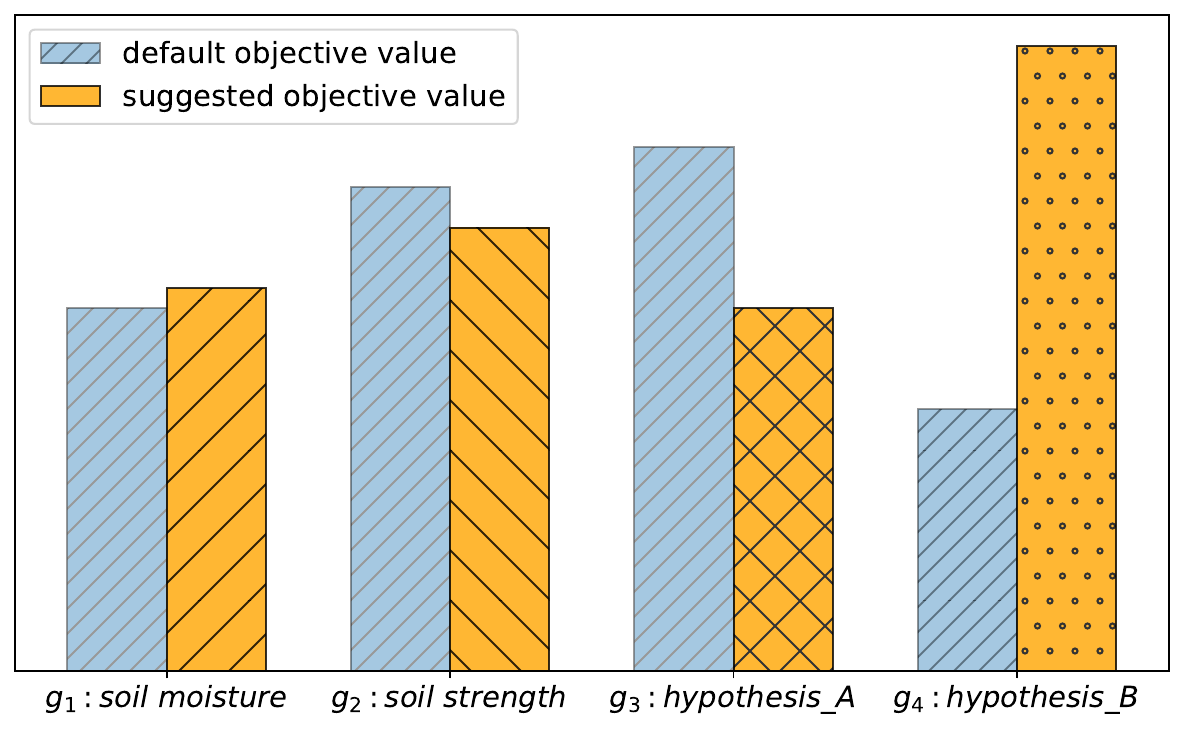}
    \label{fig:second_obj_change}
    }
    \subfloat[]{
    \includegraphics[width=0.295\textwidth]{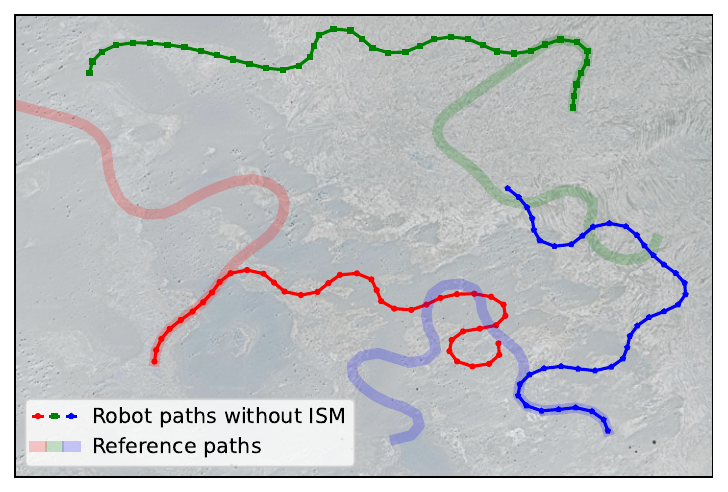}
    \label{fig:human_naive}
    }
    \caption{
     A qualitative example to illustrate how ISM can be used in human-in-the-loop multi-robot scientific data collection. (a) Human makes the first suggestion. Faded lines denote candidate path primitives for robots. Dotted lines in red/green/blue denote the selected primitives using default parameters. Black dotted lines denote the human suggestion. (b) Human makes the second suggestion. (c) The resulting robots' paths with ISM after two suggestions. (d) and (e) Comparisons of four objectives between the default selection and the first and second human suggestions, respectively. (e) The resulting robots' paths without ISM after two suggestions. }
    \label{fig:qualitative_example_information_gathering}
\end{figure*}
\begin{figure}[ht]
    \centering
    \subfloat[]{
    \includegraphics[width=0.295 \textwidth]{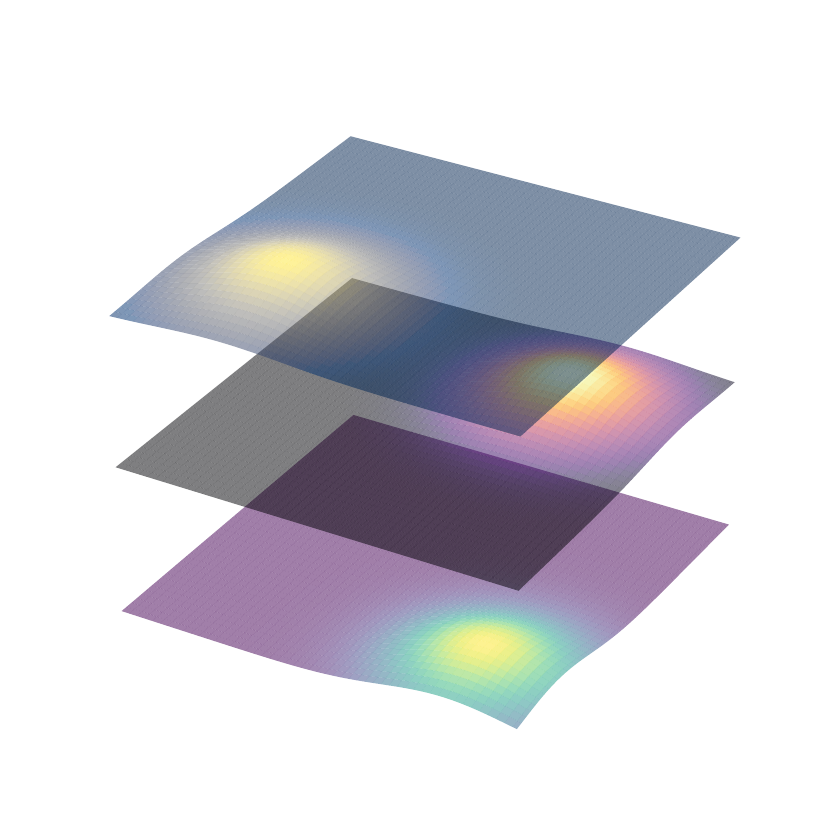}
    \label{fig:events_distribution}
    } 
    \subfloat[]{
    \includegraphics[width=0.175\textwidth]{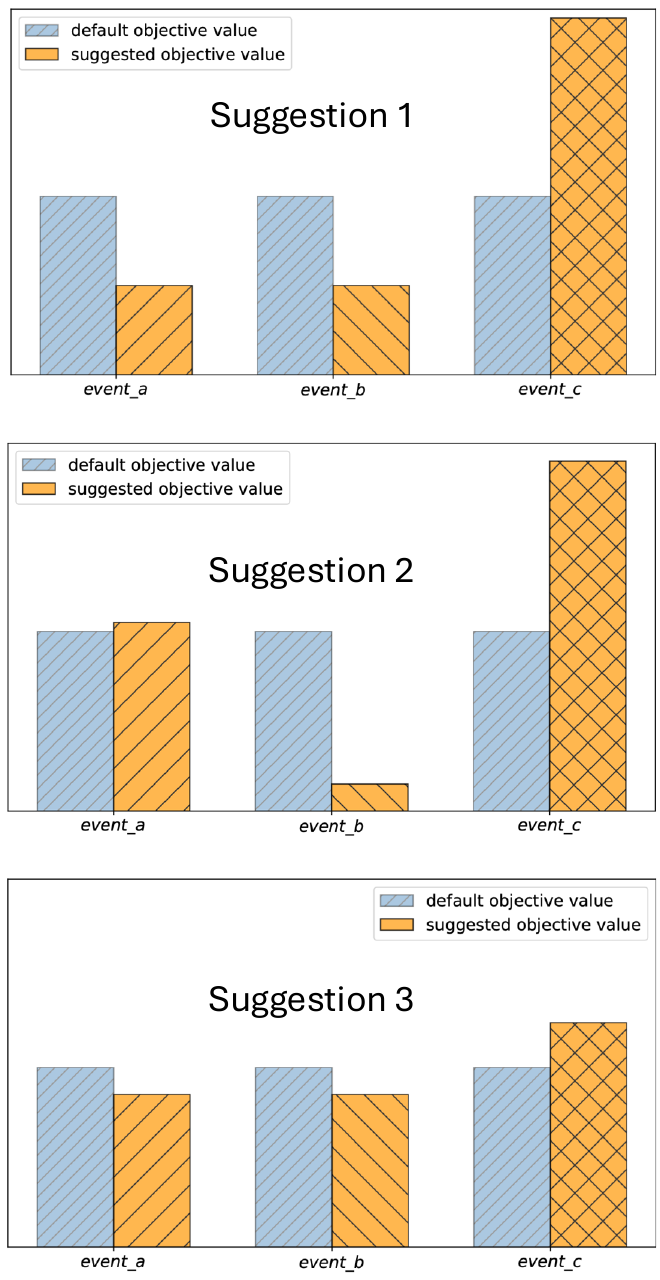}
    \label{fig:suggestion1} 
    } \hfill\\
    \subfloat[]{
    \includegraphics[width=0.305\textwidth]{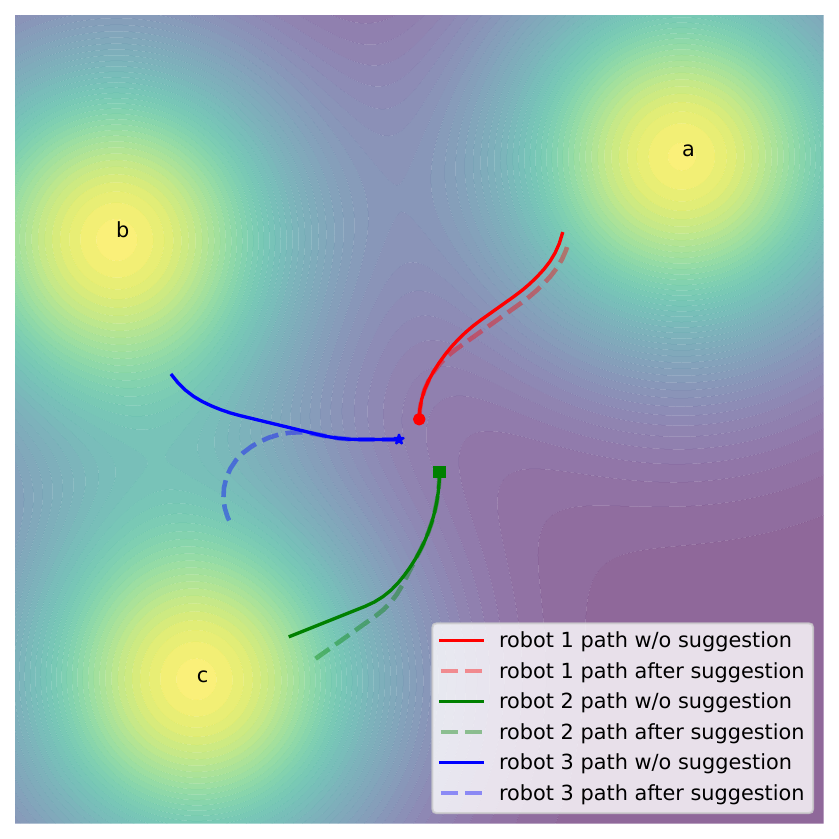}
    \label{fig:traj_with}
    }
    \caption{
     A qualitative example to illustrate how ISM can be used in the human-in-the-loop multi-robot coverage control. (a) Three event density functions. (b) Comparison of the three objective values between the default selection and the three human suggestions, respectively.  The first and third suggestions reflect that event c is more important than a and b; the second suggestion reflects that event c and a are more important than b. (c) Robots' paths w./ w.o. human suggestions. Dotted lines denote paths after suggestions.}
    \label{fig:ISM_qualitative_example_coverage}
\end{figure}
According to studies in multi-robot, multi-objective data collection, human supervisors may change their preferences over objectives during robot data collection~\cite{liu2024modelling, liu2024understanding}. In the following experiments, we assume that we know the ground truth preference of humans when they give suggestions to the robotic team. We will compare how the team performs after solving ISM compared to such ground truth behavior. A qualitative example is shown in Fig. \ref{fig:qualitative_example_information_gathering}. The problem setup is the same as that described in Sec. \ref{sec:case_studies:scientific}. There are 40 points of interest in the environment. The goal is to coordinate robots to take measurements to reduce the uncertainty estimation of properties measured at those points and test some hypotheses. Specifically, each robot can measure two physical properties of the soil: moisture and soil strength. Each property over the task area is assumed to be approximately represented as a GP with known parameters (radial basis function as kernel). There are two hypotheses to be tested: a) the relation between soil moisture and strength can be described by a piecewise linear monotone increasing function, as shown in Fig. \ref{fig:hypothesis}, and b) the moisture and strength can be described by a first-increase-then-decrease function. There are four objectives: two uncertainty metrics for GP of soil moisture and strength as defined in Eq. \eqref{eq:objective:mutual_information} (Property Objective A and B); two hypothesis-related objectives as defined in Eq. \eqref{eq:objective:discrepency} (Hypothesis Objective A and B). We follow the conclusion from  \cite{liu2024modelling, liu2024understanding} and assume that the human will first equally prioritize the two soil property uncertainty metrics, while after robots collect some initial data, the human will then start to focus more on the Hypothesis Objective A, and then on Hypothesis Objective B. That is, during the sampling process, humans will make suggestions twice: the first time, they will show a preference shift towards testing the first hypothesis, and the second suggestion reflects their concern to validate the second hypothesis.

Fig. \ref{fig:qualitative_example_information_gathering} shows a complete example of such a process. In Fig. \ref{fig:first_suggestion}, humans made their first suggestion to guide robots towards hypothesis testing: the black dotted lines denote the human suggestion; the dotted red/green/blue lines denote the default selected primitives. The corresponding change in the four objectives is given in Fig. \ref{fig:first_obj_change}: the suggested path primitives result in a high value in the objective related to hypothesis A. 
 In Fig. \ref{fig:second_suggestion}, the human makes another suggestion to shift their attention to the second hypothesis. The corresponding change in objective values is given in Fig. \ref{fig:second_obj_change}. The resulting paths after two suggestions and the reference paths computed using human ground truth is shown in Fig. \ref{fig:human_ISM}: by solving ISM, the robots' sampling paths are well-aligned with what humans are thinking, showing the effectiveness of the ISM to adapt to human preferences. By contrast, without ISM as shown in Fig. \ref{fig:human_naive}, the resulting sampling path will greatly deviate from the ground truth sampling paths.

\subsection{Multi-Robot Coverage Control for Event Detection}
In the multi-robot coverage control scenario, three stochastic events may happen in the environment $\Omega$. Each event $j$ is associated with a Gaussian event density $\phi_j$ as shown in Fig. \ref{fig:events_distribution}. Each robot has 15 path primitives. 
In the offline design phase, the three events are considered to be equally important, i.e., they have the same importance factor in Eq. \eqref{eq:submodular_coverage_objecive}. Correspondingly, the three objectives shown as blue bars in Fig. \ref{fig:suggestion1} have the same height. As a result, when we deploy robots for coverage control, they will have paths plotted as solid lines in Fig. \ref{fig:traj_with}, i.e., they will be guided toward the regions with high event density, and each will move toward one specific region with high event density. 
However, if the human thinks some events are more important and suggests the robot choose different path primitives as shown in Fig. \ref{fig:suggestion1} (three suggestions from humans: the first and third suggestions indicate that event c is more important than a and b; the second suggestion indicates that event c and a are more important than b.), 
the robots will solve ISM as defined in Problem \ref{prob:multiple_suggestion} to update the importance factors based on such a suggestion and change their behavior correspondingly. The resulting paths are shown as dotted lines in Fig. \ref{fig:traj_with}.


\subsection{Quantitative Analysis} 
To validate the effectiveness of the proposed ISM framework in adapting to changes in human preference, we test our framework in a synthetic dataset generated using the multi-objective coverage setup and compare it with several baselines. Specifically, we randomly generate a set of ground truth human preference parameters, which are used to generate suggestions. For the single suggestion case, we compute the sampling paths using the ground truth parameters as reference paths. 
The paths generated by all approaches, including ISM, are compared with these reference paths. Specifically, we use the normalized deviation between paths as the metric to compute proximity between paths: 
\begin{equation*}
    dist({p}_{ref}, {p}) = \sum_{i=1}^{\abs{{p}_{ref}}} \norm{{p}_{ref}[i]-{p}[i]} / \norm{{p}_{ref}},
\end{equation*}
where ${p}_{ref}$ and ${p}$ denote the human sampling path and algorithm-generated path, respectively.  Each path consists of a sequence of sampling locations with a fixed number of points, where  $\abs{{p}_{ref}}$ and $\norm{{p}_{ref}}$ denote the number of sampling points and the length of ${p}_{ref}$, respectively.  This metric can quantify the deviataion of a path ${p}$ from the reference path ${p}_{ref}$. 

We consider three baseline approaches. The first one is a naive baseline (Naive), in which human suggestions are ignored. The second one is a greedy baseline (GreedyNeighbor), in which we incrementally increase/decrease each parameter in ${\bm{\theta}}$ by a certain amount (0.02 for each step and 0.2 for the total) and pick the best combination of all the parameters that minimizes the overall deviation from human suggestions. 
The third baseline is a stochastic baseline (Stochastic), in which we discretize the parameters in the parameter space ($(0, 10]$ for each parameter in $\hat{\bm{\theta}}$) with a step size of 0.1 and randomly sample a set of combinations, and pick the best one w.r.t. human suggestions. The results are shown in Fig. \ref{fig:deviation_comparison}, with our approach (BB-ISM) plotted in blue. Overall, BB-ISM induces much smaller deviations compared to other three baselines. Furthermore, the deviation increases only mildly as the number of robots increases. 

For the case with multiple suggestions, we randomly generate 200 instances and compare the solution quality w.r.t. the three objectives of Problem \ref{prob:multiple_suggestion}. The results are shown in Fig. \ref{fig:multi_objective_comparison}. For each instance, we use the results (three objective values) obtained by the GreedyNeighbor baseline as an anchor point, used to normalize the results obtained by other approaches. As a result, we set this anchor point at location (1, 1, 1) in the plot. If a certain approach results in a solution that is better than the GreedyNeighbor w.r.t. all three objectives, the solution will be a point within the box $0< x, y, z \leq1$. As we can see from Fig. \ref{fig:multi_objective_comparison},  for most test instances, solutions obtained using our approach are in the desired box region, which implies smaller values in all three objectives. By contrast, solutions by the Stochastic approach scatter far from the box region, while Naive generates solutions spreading even further.

\begin{figure}
\centerline{\includegraphics[scale=0.45]{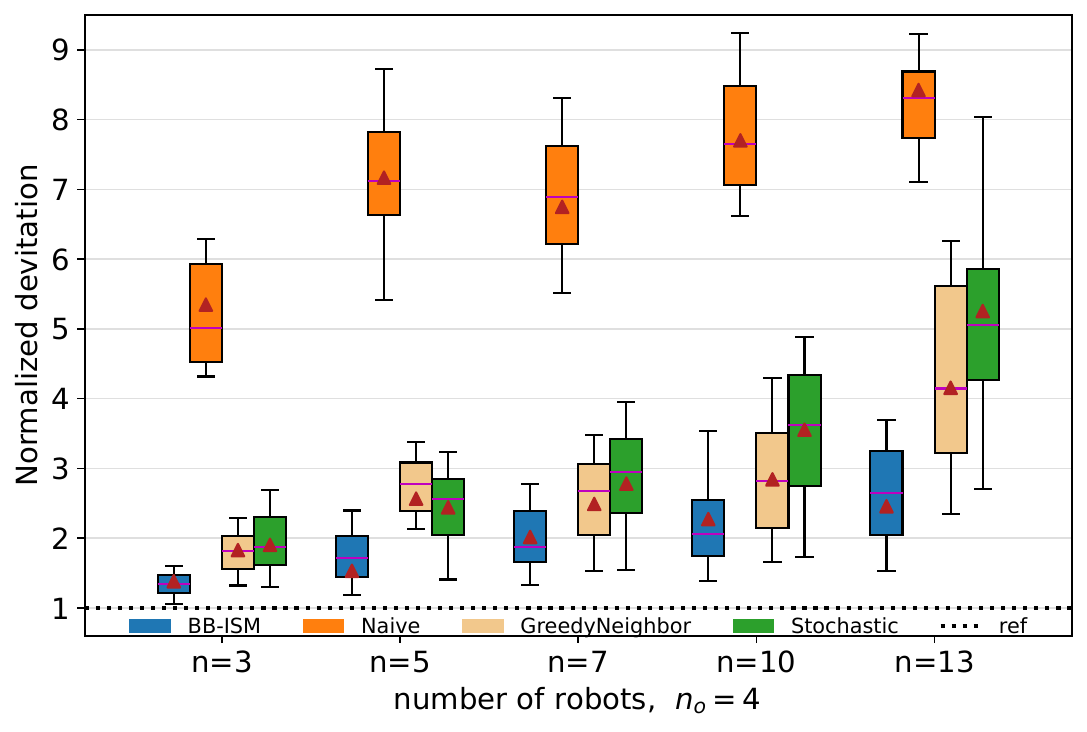}}
\caption{Baseline comparison on path deviation w.r.t. human reference paths. BB-ISM denotes our approach. There are $n_o=4$ objectives. Overall, our approach achieves much lower normalized deviation than comparisons, suggesting its effectiveness in adapting to human suggestions. 
}
\label{fig:deviation_comparison}
\end{figure}

\begin{figure}
\centerline{\includegraphics[scale=0.38]{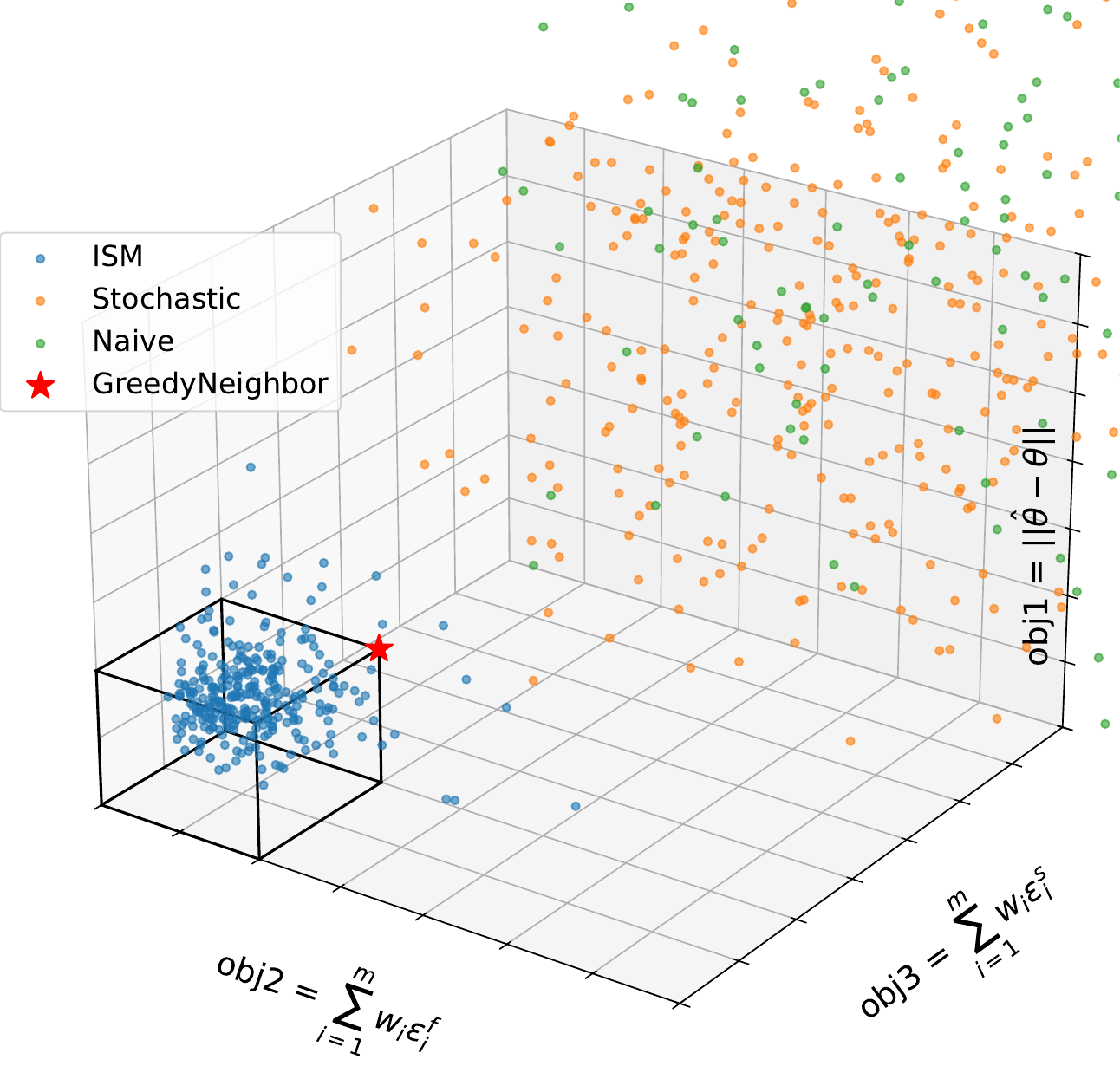}}
\caption{Baseline comparison on objective value distribution for Problem \ref{prob:multiple_suggestion}. 
}
\label{fig:multi_objective_comparison}
\end{figure}

\subsection{Algorithm Validation: BB-ISM} 
We further validate the computational performance of the proposed BB-ISM algorithm in larger problem instances using a synthetic dataset generated for the multi-objective coverage setup. 
We are interested in knowing how the optimality, running time, and memory usage change as the number of preference parameters (i.e., the dimension of $\bm{\theta}$) and the number of robots increase. 

\textbf{Optimality} We compare two types of baselines with the proposed algorithm BB-ISM in the context of coverage for event detection. The first is to directly solve Problem \ref{problem:unordered_ism} using Gurobi. We refer to this baseline as IP. The second type is to randomly sample a few suggestion orderings and solve Problem \ref{problem:ordered_ism} for each. Then, select the best solution among all the samples. We denote this baseline as RS-$z$, where $z$ refers to the number of samples. We generate test instances in the following way. First, we generate a collection of submodular objectives, each of which is associated with a particular $\bm{\theta}$. For each $\bm{\theta}$, we randomly generate some feasible suggestions and solve the ISM for each to obtain a $\hat{\bm{\theta}}$. To compare results across all $\bm{\theta}$, we use the normalized deviation $\frac{\norm{\bm{\theta}-\hat{\bm{\theta}}}}{\norm{\bm{\theta}}}$ as the comparison metric, which we aim to minimize.
The results are shown in Fig. \ref{fig:optimality}. First, we observe that across all the test groups, i.e., different dimensions of $\bm{\theta}$, the proposed algorithm, denoted as BB, achieves the same normalized deviation, which suggests that the proposed algorithm returns the optimal solution given by IP. 
For the RS baselines, they all have higher normalized deviation due to sub-optimality, which can be improved by increasing the number of samples.

\textbf{Runtime} 
We compare the runtime of the proposed algorithm (BB) with that of integer programming using Gurobi (IP). We compare three groups for different dimensions of $\bm{\theta}$. The results are shown in Fig. \ref{fig:running_time_theta}. Since 
IP's runtime becomes computationally infeasible 
after 7 robots, we only plot the IP results for fewer than 8 robots. As shown in Fig. \ref{fig:running_time_theta3}, \ref{fig:running_time_theta6}, and \ref{fig:running_time_theta9}, the proposed algorithm (BB) is much more computationally scalable than IP w.r.t. the number of robots, and the runtime increases relatively slowly across the test cases. Besides, as the dimension of $\bm{\theta}$ increases, the runtime increase is not very significant compared to that of IP.

\textbf{Peak Memory Usage} The peak memory usage results are displayed in Fig. \ref{fig:memory_theta9}. We can observe that the BB approach requires much less memory compared to IP as the number of robots increases.   

\begin{figure}
\centerline{\includegraphics[scale=0.35]{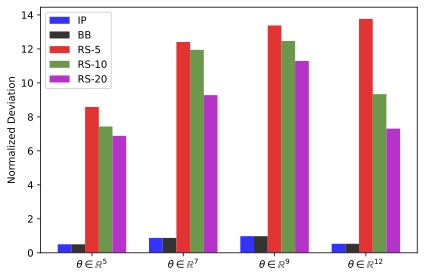}}
\caption{Optimality comparisons against IP and RS baselines. BB is ours. 
}
\label{fig:optimality}
\end{figure}

\begin{figure}[t]
    \centering
    \subfloat[Runtime $\bm{\theta} \in \mathbb{R}^3$]{
    \includegraphics[width=0.23 \textwidth]{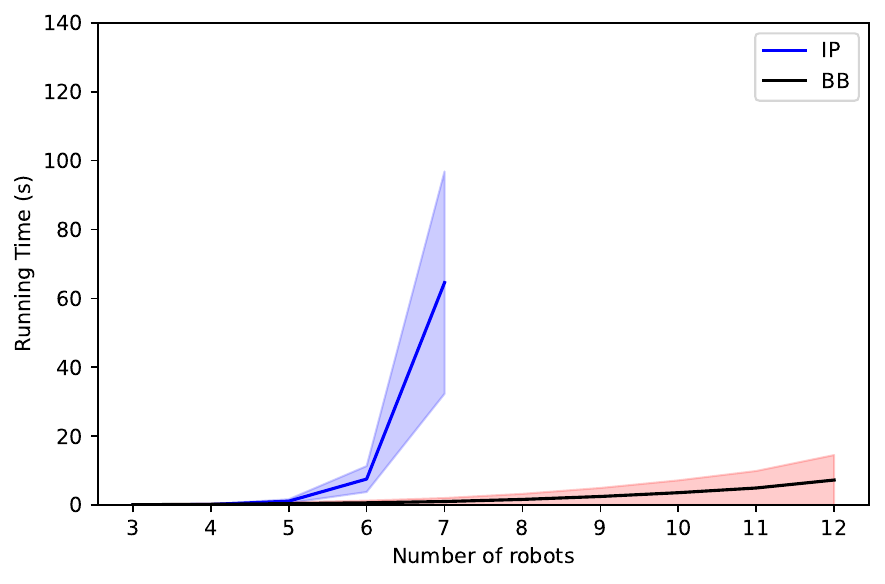}
    \label{fig:running_time_theta3}
    } 
    \subfloat[Runtime $\bm{\theta} \in \mathbb{R}^6$]{
    \includegraphics[width=0.23\textwidth]{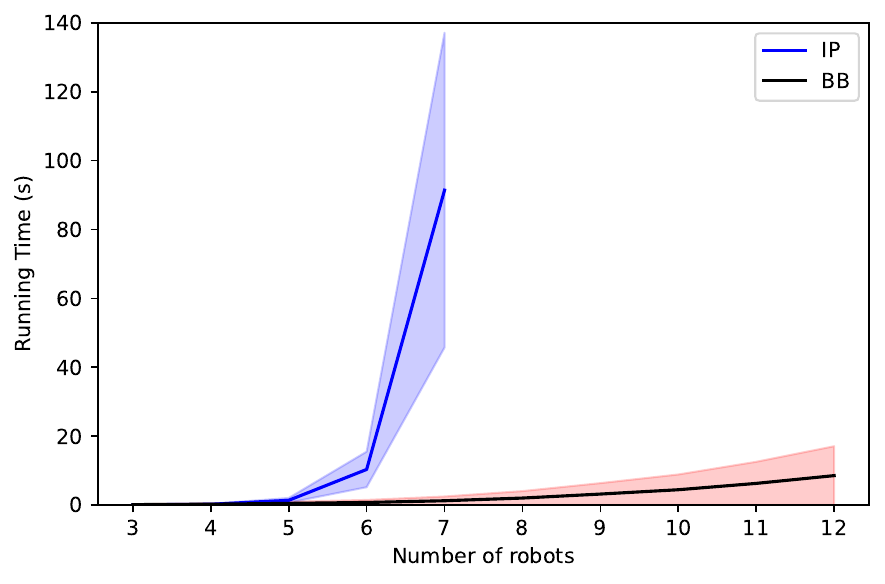}
    \label{fig:running_time_theta6}
    }\\
    \subfloat[Runtime $\bm{\theta} \in \mathbb{R}^9$]{
    \includegraphics[width=0.23\textwidth]{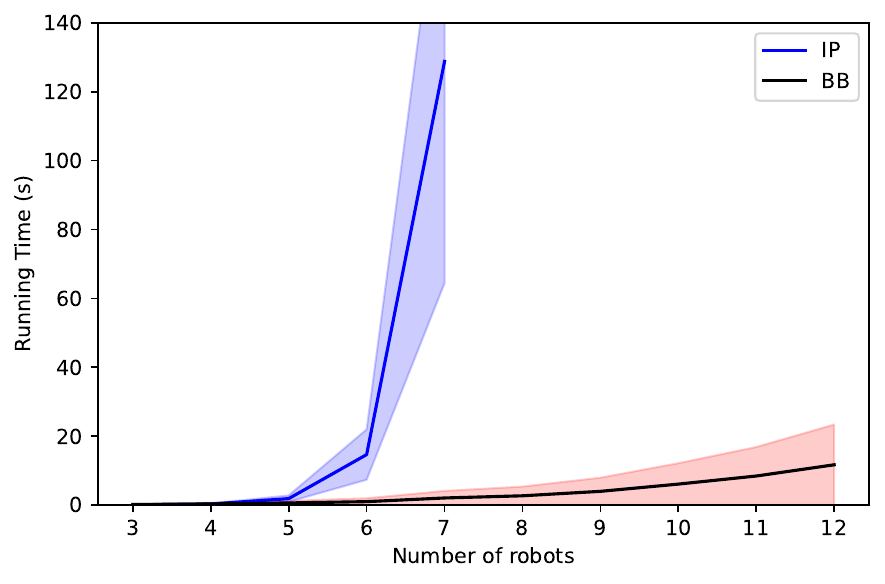}
    \label{fig:running_time_theta9}
    }
     \subfloat[Peak memory $\bm{\theta} \in \mathbb{R}^9$]{
    \includegraphics[width=0.23\textwidth]{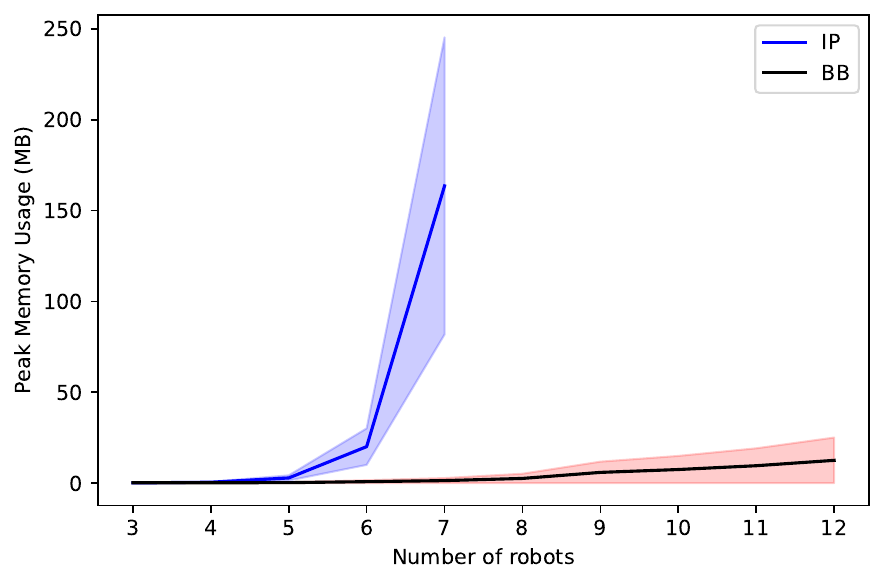}
    \label{fig:memory_theta9}
    }
    \caption{
     Running time and peak memory usage comparisons with baselines. BB is ours.}
    \label{fig:running_time_theta}
\end{figure}

\section{Conclusion}\label{sec:conclusion}
We consider a new type of inverse combinatorial optimization problem - inverse submodular optimization (ISM)- for human-in-the-loop multi-robot information gathering. With one human suggestion at each decision-making step, we show that this problem can be formulated as an MIQP and introduce a new branch and bound algorithm that can optimally solve ISM. For the case with multiple human suggestions in a given decision-making step, we formulate the problem as a multi-objective optimization and propose to solve it with Pareto MCTS. We validate the proposed algorithm in multi-robot scientific data collection and multi-objective coverage control tasks.  

This work can be viewed as an initial efforts for bridging inverse combinatorial optimization and human-in-the-loop multi-robot coordination. Many interesting aspects are worth exploring in the future. For example, when human preferences may affect both the objective and constraints, it is unclear how to define and solve ICO in such cases to accommodate human preference changes.  Another interesting direction involves efficiently solving the inverse problem when the preference-related parameters are nonlinear in the objective and constraints (e.g., in models related to human trust and proficiency) and the forward approximation algorithm is beyond greedy. 

\section*{Appendix}
\subsection{Proof of Theorem \ref{theorem:submodular_objective_scientific}}
\begin{proof}
        The objective $f(\mathcal{S}, \bm{\theta}) = \sum_{i} \theta_{i} g_{i}(\mathcal{S}) + \sum_{j} \theta_j g_{j}(\mathcal{S})$ defined in Eq. \eqref{eq:data_collection_objective} consists of two components. The first component is a weighted sum of mutual information terms. Since mutual information is a submodular set function \cite{krause2008near}, and nonnegative weighted sums preserve submodularity, this component is submodular. The second component is a weighted sum of the discrepancy term defined in Eq. \eqref{eq:objective:discrepency}. This term is modular, and weighted sums of modular functions remain modular. Finally, because the sum of a submodular function and a modular function is submodular, the overall objective function is submodular.
\end{proof}

\begin{lemma}\label{lemma:product_submodular}
    If $h_1$ and $h_2$ are both supermodular, non-decreasing (or non-increasing), and non-negative set functions defined on a finite set $\mathcal{V}$, then $h_1h_2$ is also a supermodular non-decreasing (or non-increasing) and non-negative function. 
\end{lemma}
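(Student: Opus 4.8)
The plan is to work with the marginal-gain characterization of supermodularity rather than the set-pair form of \eqref{eq:submodularity_definition}. Recall that $h$ is supermodular if and only if for all $\mathcal{A} \subseteq \mathcal{B} \subseteq \mathcal{V}$ and $v \in \mathcal{V} \setminus \mathcal{B}$ we have $\Delta h(v \mid \mathcal{A}) \leq \Delta h(v \mid \mathcal{B})$, i.e. marginal gains are non-decreasing in the conditioning set. The non-negativity and monotonicity claims for $h_1 h_2$ are immediate and I would dispatch them first: a product of non-negative numbers is non-negative, and if $\mathcal{A} \subseteq \mathcal{A}'$ then $0 \leq h_i(\mathcal{A}) \leq h_i(\mathcal{A}')$ for $i=1,2$ in the non-decreasing case (with the ordering reversed but still non-negative in the non-increasing case), so the products inherit the corresponding ordering. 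The substance is the supermodularity of the product.

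First I would establish a discrete product rule for the marginal gain of $f = h_1 h_2$. Expanding $f(\mathcal{A} \cup \{v\}) - f(\mathcal{A})$ and inserting $\pm\, h_1(\mathcal{A}) h_2(\mathcal{A} \cup \{v\})$ yields
\begin{equation}
\Delta f(v \mid \mathcal{A}) = \Delta h_1(v \mid \mathcal{A})\, h_2(\mathcal{A} \cup \{v\}) + h_1(\mathcal{A})\, \Delta h_2(v \mid \mathcal{A}),
\end{equation}
and the identical identity holds with $\mathcal{A}$ replaced by $\mathcal{B}$. The target inequality $\Delta f(v \mid \mathcal{A}) \leq \Delta f(v \mid \mathcal{B})$ then reduces to comparing the two terms on the right-hand side separately and summing.

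For the non-decreasing case every factor is non-negative and the relevant orderings all hold: $0 \leq \Delta h_1(v \mid \mathcal{A}) \leq \Delta h_1(v \mid \mathcal{B})$ by supermodularity, $0 \leq h_2(\mathcal{A} \cup \{v\}) \leq h_2(\mathcal{B} \cup \{v\})$ by monotonicity (since $\mathcal{A} \cup \{v\} \subseteq \mathcal{B} \cup \{v\}$), and symmetrically $0 \leq h_1(\mathcal{A}) \leq h_1(\mathcal{B})$ and $0 \leq \Delta h_2(v \mid \mathcal{A}) \leq \Delta h_2(v \mid \mathcal{B})$. Each term is thus a product of two non-negative quantities dominated factor-by-factor by its $\mathcal{B}$-counterpart, so each term increases; adding the two term inequalities gives the claim.

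The hard part is the non-increasing case, where the marginal gains become non-positive and monotonicity flips the ordering of the function values, so the monotone-product argument no longer applies verbatim. Here I would run a careful two-step sign analysis per term. For the first term we have $\Delta h_1(v \mid \mathcal{A}) \leq 0$ and $h_2(\mathcal{A} \cup \{v\}) \geq h_2(\mathcal{B} \cup \{v\}) \geq 0$; multiplying the non-positive factor by the larger multiplier only decreases it, giving $\Delta h_1(v \mid \mathcal{A})\, h_2(\mathcal{A} \cup \{v\}) \leq \Delta h_1(v \mid \mathcal{A})\, h_2(\mathcal{B} \cup \{v\})$, and then supermodularity $\Delta h_1(v \mid \mathcal{A}) \leq \Delta h_1(v \mid \mathcal{B})$ together with $h_2(\mathcal{B} \cup \{v\}) \geq 0$ raises this to $\Delta h_1(v \mid \mathcal{B})\, h_2(\mathcal{B} \cup \{v\})$. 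The second term is handled identically with the roles of $h_1$ and $h_2$ interchanged. Summing again yields $\Delta f(v \mid \mathcal{A}) \leq \Delta f(v \mid \mathcal{B})$, completing the argument. Keeping the inequality directions consistent through the sign changes is the only delicate point; everything else is routine bookkeeping.
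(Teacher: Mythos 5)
Your proof is correct and follows essentially the same route as the paper's: the identical add-and-subtract product rule for the marginal gain, followed by term-by-term comparison using supermodularity, monotonicity, and non-negativity, split into the same two monotonicity cases. If anything, your explicit two-step sign analysis in the non-increasing case is more careful than the paper's treatment, which only remarks that the flipped monotonicity inequalities combine with the non-positive marginals to give the same conclusion.
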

\begin{proof}
    We prove this lemma using the definition of the supermodular function. Let us first consider the case where both functions are non-decreasing. Given two sets $\mathcal{A}$, $\mathcal{B}$ and $\mathcal{A} \subseteq \mathcal{B}$, by definition, if we add one extra element $a$ to each set, we have
    \begin{align}
        \begin{aligned}
            &h_1(\mathcal{A} \cup\{a\})\cdot h_2(\mathcal{A} \cup\{a\}) - h_1(\mathcal{A})\cdot h_2(\mathcal{A})\\
            = (&h_1(\mathcal{A} \cup\{a\})-h_1(\mathcal{A}))\cdot h_2(\mathcal{A}\cup\{a\})\\
            &\quad \quad \quad \quad + h_1(\mathcal{A})\cdot(h_2( \mathcal{A} \cup \{a\})-h_2(\mathcal{A})).
        \end{aligned}
    \end{align}
    
    By supermodularity, we have 
\begin{align}
    h_1(\mathcal{A} \cup \{a\})-h_1(\mathcal{A}) \leq h_1(\mathcal{B} \cup \{a\})-h_1(\mathcal{B}) \label{eq:lemma_supermodularity_1}\\
    h_2(\mathcal{A} \cup \{a\})-h_2(\mathcal{A}) \leq h_2(\mathcal{B} \cup \{a\})-h_2(\mathcal{B})\label{eq:lemma_supermodularity_2}.
\end{align}
By non-decreasing monotonicity, we have
\begin{align}
    h_1(\mathcal{A} \cup \{a\}) \leq h_1(\mathcal{B} \cup \{a\}) \label{eq:lemma_monotone_1}\\
    h_2(\mathcal{A} \cup \{a\}) \leq h_2(\mathcal{B} \cup \{a\}) \label{eq:lemma_monotone_2}.
\end{align}
Together with non-negativity, we have 
\begin{align}
        &\begin{aligned}\label{eq:lemma_last_step1}
            h_1(\mathcal{A} \cup \{a\})-h_1(\mathcal{A})) \cdot h_2(\mathcal{A} \cup \{a\}) + \\h_1(\mathcal{A}) \cdot (h_2(\mathcal{A} \cup \{a\})-h_2(\mathcal{A}))
        \end{aligned}
       \\ &
        \begin{aligned}\label{eq:lemma_last_step2}
             \leq  (h_1(\mathcal{B} \cup \{a\})-h_1(\mathcal{B})) \cdot h_2(\mathcal{B} \cup \{a\}) + \\h_1(\mathcal{B}) \cdot (h_2(\mathcal{B} \cup \{a\})-h_2(\mathcal{B}))
        \end{aligned}
        \\&=h_1(\mathcal{B} \cup \{a\}) \cdot h_2(\mathcal{B} \cup \{a\})-h_1(\mathcal{B} )  \cdot h_2(\mathcal{B}).
\end{align}
By definition, $h_1h_2$ is a supermodular function.

For the case where both functions are non-increasing, Eq. \eqref{eq:lemma_supermodularity_1} and Eq. \eqref{eq:lemma_supermodularity_2} still hold. The difference is that both sides of the inequalities are non-positive. In Eq. \eqref{eq:lemma_monotone_1} and Eq. \eqref{eq:lemma_monotone_2}, the inequalities should be flipped. However, since both sides of Eq. \eqref{eq:lemma_supermodularity_1} and Eq. \eqref{eq:lemma_supermodularity_2} are non-positive, we can still get the same result as those in Eq. \eqref{eq:lemma_last_step1} and Eq. \eqref{eq:lemma_last_step2}.

\end{proof}

\subsection{Proof of Theorem \ref{theorem:submodular_objective_horizon}}
\begin{theorem}[Theorem 1 in \cite{sun2017submodularity}]\label{theorem:submodular_coverage}
    If we treat the $h_j(\bm{x}(k))$ in Eq. \eqref{eq:single_step_coverage} as a set function with input set as $\bm{x}(k)$, $h_j(\bm{x}(k))$ is monotone submodular. 
\end{theorem}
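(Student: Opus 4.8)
The plan is to fix a point $x \in \Omega$, show that the joint detection probability $\text{Pr}(x, \bm{p}(k))$ is a monotone submodular set function of the chosen robot positions, and then lift this to $h_j$ by integrating against the non-negative density $\phi_j$. Let $\mathcal{S}$ denote the set of selected positions and abbreviate the per-sensor \emph{miss} probability at $x$ by $q_i \defeq 1 - \text{Pr}(x, p_i) \in [0,1]$, so that the joint model appearing in Eq.~\eqref{eq:single_step_coverage} reads $\text{Pr}(x, \mathcal{S}) = 1 - \prod_{i \in \mathcal{S}} q_i$. The one computation driving everything is the marginal gain of inserting a new position $p_\ell \notin \mathcal{S}$, which telescopes to
\begin{equation}\label{eq:plan_marginal}
    \text{Pr}(x, \mathcal{S} \cup \{p_\ell\}) - \text{Pr}(x, \mathcal{S}) = \text{Pr}(x, p_\ell) \prod_{i \in \mathcal{S}} q_i .
\end{equation}
Because each $q_i \in [0,1]$ and $\text{Pr}(x, p_\ell) \geq 0$, the right-hand side is non-negative, which establishes pointwise monotonicity.

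For pointwise submodularity I would compare \eqref{eq:plan_marginal} across nested sets $\mathcal{A} \subseteq \mathcal{B}$ with $p_\ell \notin \mathcal{B}$. The two marginal gains are $\text{Pr}(x, p_\ell)\prod_{i \in \mathcal{A}} q_i$ and $\text{Pr}(x, p_\ell)\prod_{i \in \mathcal{B}} q_i$ respectively. Since every factor $q_i$ lies in $[0,1]$, enlarging the index set can only shrink the product, so $\prod_{i \in \mathcal{A}} q_i \geq \prod_{i \in \mathcal{B}} q_i$; multiplying through by the common non-negative factor $\text{Pr}(x, p_\ell)$ yields precisely the defining inequality \eqref{eq:submodularity_definition}. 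Hence $\text{Pr}(x, \cdot)$ is monotone submodular for every fixed $x \in \Omega$.

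The last step lifts this to $h_j(\bm{p}(k)) = \int_{\Omega} \phi_j(x)\, \text{Pr}(x, \bm{p}(k))\, dx$. Since $\phi_j(x) \geq 0$, the function $h_j$ is a non-negatively weighted aggregate of the pointwise submodular functions, and both the monotonicity inequality and the submodular inequality \eqref{eq:submodularity_definition} are closed under non-negative linear combination. The main thing to be careful about is that the integral is a limit rather than a finite sum, so I would argue that the non-strict inequality \eqref{eq:submodularity_definition}, holding pointwise for each $x$ and scaled by the non-negative weights $\phi_j(x)$, survives passage to the integral; the integrability condition $\int_{\Omega}\phi_j(x)\,dx < \infty$ guarantees that all four set-function values entering the submodular test are finite, so the inequality can be integrated term by term. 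This is the only analytic subtlety; the entire combinatorial content is carried by the telescoping identity \eqref{eq:plan_marginal}.
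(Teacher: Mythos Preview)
Your argument is correct. The telescoping identity for the marginal gain, the monotonicity of the product $\prod_{i\in\mathcal{S}} q_i$ in the index set, and the preservation of monotonicity and submodularity under non-negative (integral) combination are exactly the right ingredients, and you handle the integrability caveat appropriately.

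As for comparison: the paper does \emph{not} supply its own proof of this statement. It is quoted verbatim as Theorem~1 of \cite{sun2017submodularity} and invoked as a black box, together with Lemma~\ref{lemma:product_submodular}, to establish the submodularity of the multi-horizon objective in Theorem~\ref{theorem:submodular_objective_horizon}. So there is nothing to compare against here; what you have written is a self-contained proof that the paper simply outsources to the cited reference. Your direct approach is in fact the standard one for probabilistic-coverage functions of this ``noisy-OR'' form, and it is more informative than a bare citation because it makes explicit that the whole combinatorial content reduces to the single marginal-gain identity you isolate.
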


It should be noted that $\bm{x}(k)$ can be viewed as a column of the selected set $\mathcal{S}$ (different path primitives correspond to different rows). As a result, $h_j(\bm{x}(t))$ is submodular w.r.t. $\mathcal{S}$. We will prove Theorem \ref{theorem:submodular_objective_horizon} using Theorem \ref{theorem:submodular_coverage} and Lemma \ref{lemma:product_submodular}. 
\begin{proof}
    By Theorem \ref{theorem:submodular_coverage}, $h_j(\bm{x}(t))$ is a monotone non-decreasing non-negative  submodular function w.r.t. $\mathcal{S}$. Therefore, $1-h_j(\bm{x}(t))$ is a monotone non-increasing non-negative  supermodular function w.r.t. $\mathcal{S}$. By Lemma \ref{lemma:product_submodular},  $\prod_{t=k}^{t={k+H}} 1-h(\bm{x}(t))$ is also a monotone non-increasing non-negative  supermodular function. Then,  $(1-\prod_{t=k}^{t={k+H}} 1-h(\bm{x}(t)))$ is a monotone non-decreasing non-negative  submodular function. Since the weighted sum of submodular functions using positive weight results in a submodular function, $f(\mathcal{S}, \bm{\theta})$ is a monotone submodular function.
\end{proof}

\section*{Acknowledgments}

This work was sponsored in part by DEVCOM Army Research Laboratory under Cooperative Agreement Number W911NF-17-2-0181. The views and conclusions contained in this document are those of the authors and should not be interpreted as representing the official policies, either expressed or implied, of the US Government. The US Government is
authorized to reproduce and distribute reprints for Government purposes notwithstanding any copyright notation herein.

\bibliographystyle{IEEEtran}
\bibliography{IEEEabrv, ICO}

\end{document}